\documentclass{article}

\usepackage[main,preprint]{neurips_2026}

\usepackage[utf8]{inputenc}
\usepackage[T1]{fontenc}
\usepackage{textcomp}

\usepackage[hypertexnames=false,colorlinks=true,linkcolor=black,citecolor=black,urlcolor=black]{hyperref}
\usepackage{url}

\usepackage{booktabs}
\usepackage{multirow}
\usepackage{array}
\usepackage{float}
\usepackage{placeins}
\usepackage{amsmath}
\usepackage{amsfonts}
\usepackage{amssymb}
\usepackage{mathtools}
\usepackage{nicefrac}

\usepackage{amsthm}
\newtheorem{definition}{Definition}
\newtheorem{lemma}{Lemma}
\newtheorem{proposition}{Proposition}
\newtheorem{assumption}{Assumption}

\usepackage{cleveref}
\crefname{assumption}{Assumption}{Assumptions}

\usepackage{enumitem}

\usepackage{graphicx}

\usepackage{microtype}
\usepackage{xcolor}
\usepackage{eso-pic}

\makeatletter
\renewcommand{\@notice}{%
  \AddToShipoutPictureFG*{%
    \AtPageLowerLeft{%
      \raisebox{0.34in}{\hspace*{0.5in}\parbox{\textwidth}{\footnotesize \@noticestring}}%
    }%
  }%
}
\makeatother

\definecolor{dposcol}{rgb}{0.00,0.55,0.00}
\definecolor{dnegcol}{rgb}{0.75,0.00,0.00}
\newcommand{\dpos}[1]{{\scriptsize\textcolor{dposcol}{$#1$}}}
\newcommand{\dneg}[1]{{\scriptsize\textcolor{dnegcol}{$#1$}}}
\newcommand{\dposb}[1]{{\scriptsize\textcolor{dposcol}{$\boldsymbol{#1}$}}}

\newcommand{\sysname}{\textsc{Evidence-RL}}
\newcommand{\ced}{\textsc{CED}}

\title{Evidence-RL: Towards Evidence-intensive Visual Reasoning}

\author{%
  \bfseries\small Haojie Huang\textsuperscript{1,2,*}\quad
  Xinlei Yu\textsuperscript{1,*}\quad
  Chengming Xu\textsuperscript{3,*}\quad
  Zhangquan Chen\textsuperscript{4}\quad
  Cheng Yang\textsuperscript{5} \\
  \bfseries\small Qingdong He\textsuperscript{5}\quad
  Yu Yang\textsuperscript{2}\quad
  Jiangning Zhang\textsuperscript{2}\quad
  Xiaobin Hu\textsuperscript{1,\textdagger} \\[2pt]
  \normalfont\small
  \texttt{greatesthhj@zju.edu.cn}\quad
  \texttt{ben0xiaobin0hu1@nus.edu.sg} \\[2pt]
  \normalfont\small
  \textsuperscript{1}National University of Singapore\quad
  \textsuperscript{2}Zhejiang University \\
  \textsuperscript{3}Fudan University\quad
  \textsuperscript{4}Tsinghua University\quad
  \textsuperscript{5}Tencent \\[2pt]
  \normalfont\footnotesize
  \textsuperscript{*}Equal contribution.\quad
  \textsuperscript{\textdagger}Corresponding author.
}

\begin{document}

\maketitle

\begin{abstract}
Vision-Language Models (VLMs) should answer from concrete image evidence rather than language priors, dataset shortcuts, or irrelevant visual context. Existing perception-aware post-training methods encourage image use through global perturbations or attention proxies, but they do not test whether a sampled answer causally depends on the local evidence that supports it. We propose \textbf{Counterfactual Evidence Disentanglement} (\ced{}), a training-time evidence audit for VLM grounding. For each response, \ced{} neutralizes an object-centric Evidence Region and compares the resulting support drop against matched non-evidence Regions. We combine this signal with answer correctness inside GRPO, rewarding correct answers that rely on the evidence path rather than shortcut or nuisance paths. \ced{} uses weak object-level proposals, requires no question-specific evidence annotations, and adds no inference-time overhead. Across nine public benchmarks and four backbones, \ced{} outperforms prior RL-based post-training methods, with targeted analyses verifying its object-centric signal.
\end{abstract}

\section{Introduction}
\label{sec:introduction}

Vision-Language Models (VLMs) are increasingly used as general-purpose visual reasoners~\citep{bai2025qwen25vltechnicalreport,bai2025qwen3,wang2025internvl3,li2024llava}, but correct-looking answers need not be visually grounded. A model may answer a counting or spatial question by inspecting the relevant evidence, or by relying on language priors, dataset regularities, and scene-level common sense, a failure pattern documented in hallucination, counting, spatial, and shortcut-reasoning evaluations~\citep{li2023evaluating,paiss2023teaching,wang2024picture,guan2024hallusionbench,rahmanzadehgervi2024vision,yin2026freak}. Post-training can amplify either behavior depending on what the training signal can observe.

Recent perception-aware methods attempt to reintroduce the image into the post-training loop. PAPO~\citep{wang2025papo} compares model behavior under the original image and a corrupted image, while VPPO~\citep{huang2025vppo} uses attention-derived visual dependency to adjust the training signal. These methods are important because they move beyond text-only supervision. However, they still target \emph{coarse visual dependence} as a weaker visual anchor rather than \emph{causal evidence dependence}.  In general, the image is not a single cause. It contains the target evidence that supports the answer, irrelevant context, visually salient distractors, and artifacts introduced by masking, zeroing, or noise. A global image perturbation can show that the model is sensitive to the image, but it cannot tell whether the current answer specifically depends on the local evidence that would falsify it if removed. Attention-based dependency is also an internal routing proxy, not a direct test of whether the answer is supported by the relevant evidence.

\begin{figure}[t]
    \centering
    \includegraphics[width=\linewidth]{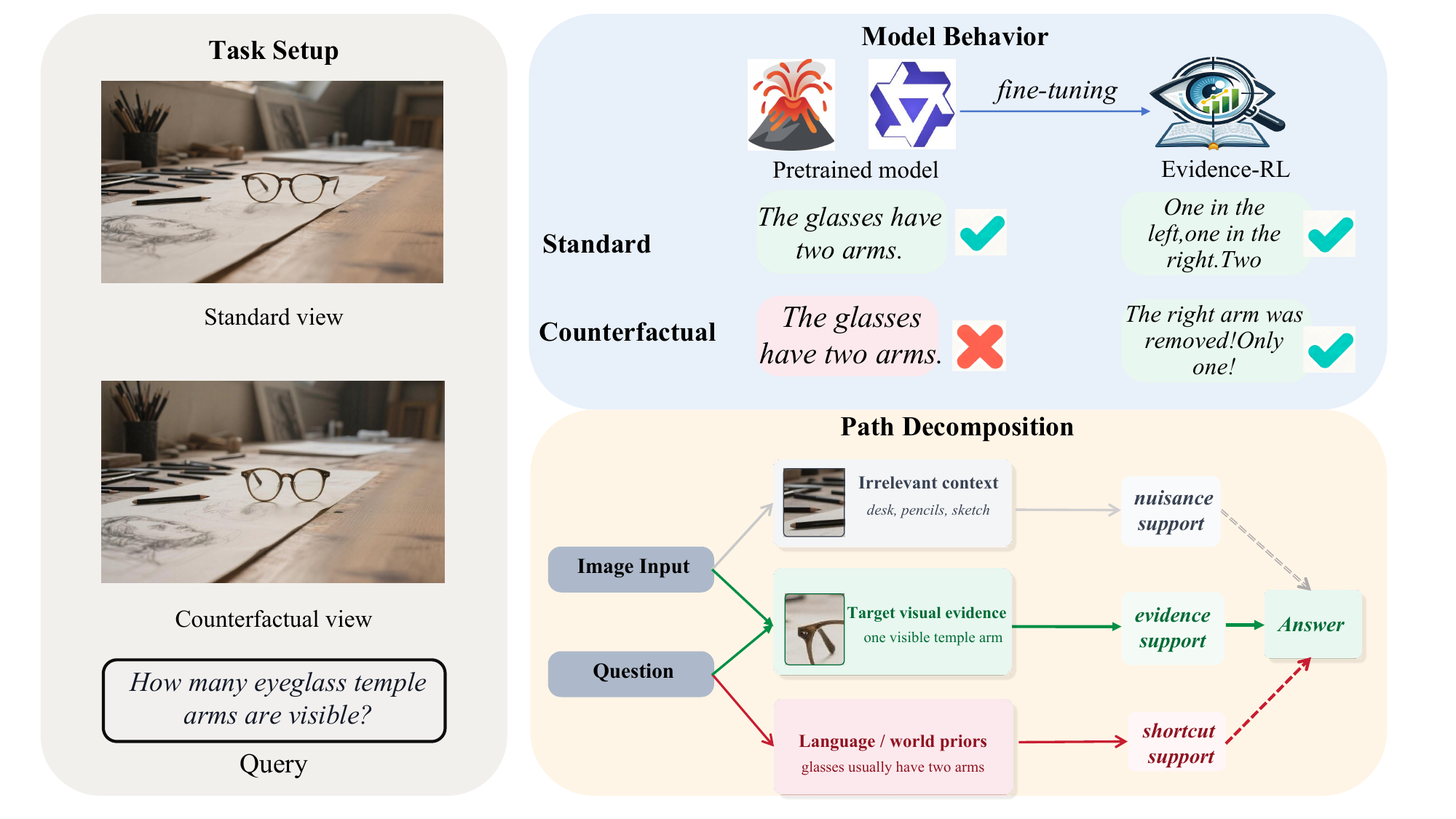}
    \vspace{-4mm}
    \caption{\textbf{Left:} Task setup with a counting query over two views of the same scene. \textbf{Upper right:} A pretrained VLM gives the same prior-based answer for both views; \sysname{} inspects the visual evidence and adjusts its answer when the scene changes. \textbf{Lower right:} Causal path decomposition: a candidate answer can be supported by the target visual evidence, by irrelevant context, or by language priors, which motivates \ced{} to test whether the response depends specifically on the evidence path.}
    \label{fig:causal-graph}
\end{figure}

The path decomposition in \Cref{fig:causal-graph} causally formalizes this interference, following the structural-causal view that interventions should isolate the path being tested~\citep{pearl2009causality}. A candidate answer can be produced by language and world priors, by irrelevant visual context or distractors, or by the target evidence that actually supports the answer. The shortcut problem arises because these paths meet at the same observed output: the same correct answer may be caused by the evidence path or by a non-evidence path. This suggests a more general requirement for grounded VLM post-training. Instead of asking only whether the model uses the image, we should ask whether the answer depends on the particular visual evidence that supports it, while discounting nuisance sensitivity to other regions and to the intervention itself. Without an image-conditional probe, a training signal computed on the model's text output alone cannot distinguish answers that traverse the target-evidence path from those that traverse a shortcut path; we make this consequence precise as \emph{evidence-closed self-evolution} in \Cref{sec:self-evolve} where grounding becomes formally unidentifiable.

Based on this causal view, we propose \textbf{Counterfactual Evidence Disentanglement} (\ced{}), a novel post-training framework that trains VLMs to rely on evidence-specific counterfactual dependence rather than coarse image sensitivity. The core idea is to turn the causal graph into a training-time counterfactual test: for each sampled response, \ced{} asks whether the model's support for that response decreases more when the object-centric Evidence Region is neutralized than when matched non-evidence Regions are neutralized. A response is therefore favored only when its correctness is tied to the evidence path, rather than preserved through language priors, irrelevant context, or generic sensitivity to visual perturbations. We implement this test by intervening on visual tokens with mean replacement, which removes region-specific information while preserving the local feature baseline, making the intervention a structured local removal of evidence rather than zeroing, random noise, or global corruption. The resulting evidence signal is combined with answer correctness inside GRPO~\citep{shao2024deepseekmath}, encouraging correct answers that are causally grounded in the image without requiring question-specific region annotations or adding inference-time overhead.

We evaluate \ced{} on public VLM benchmarks spanning VQA, hallucination detection, counting, spatial reasoning, and perception-heavy visual reasoning, in which \ced{} outperforms prior RL-based post-training methods, with clear gains on evaluations that require resisting language priors and grounding answers in visual evidence. Targeted analyses confirm that \ced{} captures evidence-specific dependence, that mean replacement provides a cleaner counterfactual intervention than common alternatives, and that the signal follows plausible target evidence rather than arbitrary regions. Throughout the paper we use \ced{} for the counterfactual evidence diagnostic itself (the reward signal) and \sysname{} for the full post-training pipeline that combines \ced{} with GRPO (\Cref{sec:method}). Our contributions can be summarized as follows:

\begin{itemize}
    \item We articulate the gap between \emph{coarse visual dependence} and \emph{counterfactual evidence dependence}, and formalize visual grounding as a structural causal graph in which the same observed answer can arise from the target-evidence path or from shortcut and nuisance paths. We further characterize why text-only self-evolution can remain evidence-closed.

    \item We propose \ced{}, a counterfactual evidence audit that tests whether a candidate answer depends on a proposed Evidence Region relative to matched non-evidence Regions. The audit is defined on the causal graph rather than tied to a specific evidence type; we instantiate it for spatial-region evidence and outline how the same recipe extends to attribute-level and relational evidence by varying the intervention.

    \item We integrate \ced{} into GRPO and validate it through signal diagnostics, controlled ablations (correctness-only, matched baselines, intervention type, proposal robustness), and nine public benchmarks across four backbones. The results show that targeting counterfactual evidence dependence transfers across backbones, recovers gains on perception-heavy benchmarks, and adds no inference-time overhead.
\end{itemize}

\section{Related Work}
\label{sec:related}

\paragraph{VLM reasoning.}
Vision-language models built on instruction-tuned LLM backbones~\citep{li2024llava,bai2025qwen25vltechnicalreport,bai2025qwen3,wang2025internvl3} have evolved from task-specific perception systems into general-purpose multimodal reasoners, supporting visual QA, counting, spatial reasoning, diagrammatic reasoning, and scientific problem solving~\citep{paiss2023teaching,wang2024picture,lu2022learn,lu2023mathvista,yue2024mmmu}. This progress also exposes a persistent gap between answer correctness and visual grounding: a model can produce a plausible answer without relying on the evidence that should justify it. Benchmarks such as HallusionBench~\citep{guan2024hallusionbench}, VLMsAreBlind~\citep{rahmanzadehgervi2024vision}, and FREAK~\citep{yin2026freak} make this failure measurable through hallucination, shortcut reasoning, and fine-grained visual errors. We address this issue from the post-training perspective, aiming to construct a reward signal that favors answers whose likelihood depends on the relevant visual evidence.

\paragraph{Post-training VLMs.}
Reinforcement learning has become a central post-training mechanism for improving reasoning, from PPO-style optimization~\citep{schulman2017proximal,shao2024deepseekmath} to verifiable-reward and self-evolution recipes~\citep{guo2025deepseek,acikgoz2026tool}. Recent VLM methods, including VLM-R1~\citep{shen2025vlm}, ViCrit~\citep{wang2025vicrit}, Perception-R1~\citep{xiao2025perceptionr1}, SophiaVL-R1~\citep{fan2025sophiavl}, and VAPO~\citep{yue2025vapo}, bring this paradigm to multimodal reasoning. The key difficulty is that visual tasks lack cheap per-sample verifiers analogous to code execution or arithmetic checking. Text-only judges can score the plausibility of an answer or rationale, but cannot tell whether the answer was caused by the image. Grounding-oriented RL therefore requires rewards that can observe visual evidence, rather than only the generated text.

\paragraph{Grounded Rewards.}
Existing attempts move in this direction but remain limited. Image-conditional rewards compare rollouts under original and perturbed images~\citep{wang2025papo, huang2025vppo, du2026linking}, yet global perturbations do not test whether the decisive region supports the answer. Process-level and self-evolving rewards~\citep{fan2025sophiavl, zhang2025viper, acikgoz2026tool, zhao2026robustness} audit the reasoning trace, but this audit is still text-closed and can reward language-prior shortcuts that produce consistent explanations; we formalize this as \emph{evidence-closed self-evolution} in \Cref{sec:self-evolve}. Counterfactual visual interventions expose grounding failures at inference time~\citep{leng2024vcd,liu2025reducing,li2025treble,wu2026revis}, but usually do not update the model. More broadly, our intervention design is related to perturbation-based visual explanations~\citep{zeiler2014visualizing,fong2017interpretable,petsiuk2018rise}, while using the resulting score as a training reward rather than as a post-hoc explanation. Closest training-time methods require region annotations in reasoning traces~\citep{sun2026regionreasoner}, process/checklist supervision~\citep{li2026palmr,zhang2025perceptual,wang2025vgr,qin2025chain}, or adversarial text/preference pairs~\citep{chen2025perturbollava,yang2026hii}. In contrast, \ced{} brings region-level counterfactual intervention into RL as a reward signal, using off-the-shelf object proposals instead of question-specific region annotations. \Cref{app:method-comparison} provides a detailed comparison.

\section{Method}
\label{sec:method}

\Cref{fig:method-overview} summarizes \sysname{} as a training-time pipeline: sample a candidate answer, contrast the proposed Evidence Region with matched non-evidence Regions, intervene in feature space, compute an answer-conditioned evidence margin, and use the resulting reward in GRPO. We now define the counterfactual probe and its integration into post-training.

\begin{figure}[t]
    \centering
    \includegraphics[width=0.95\linewidth]{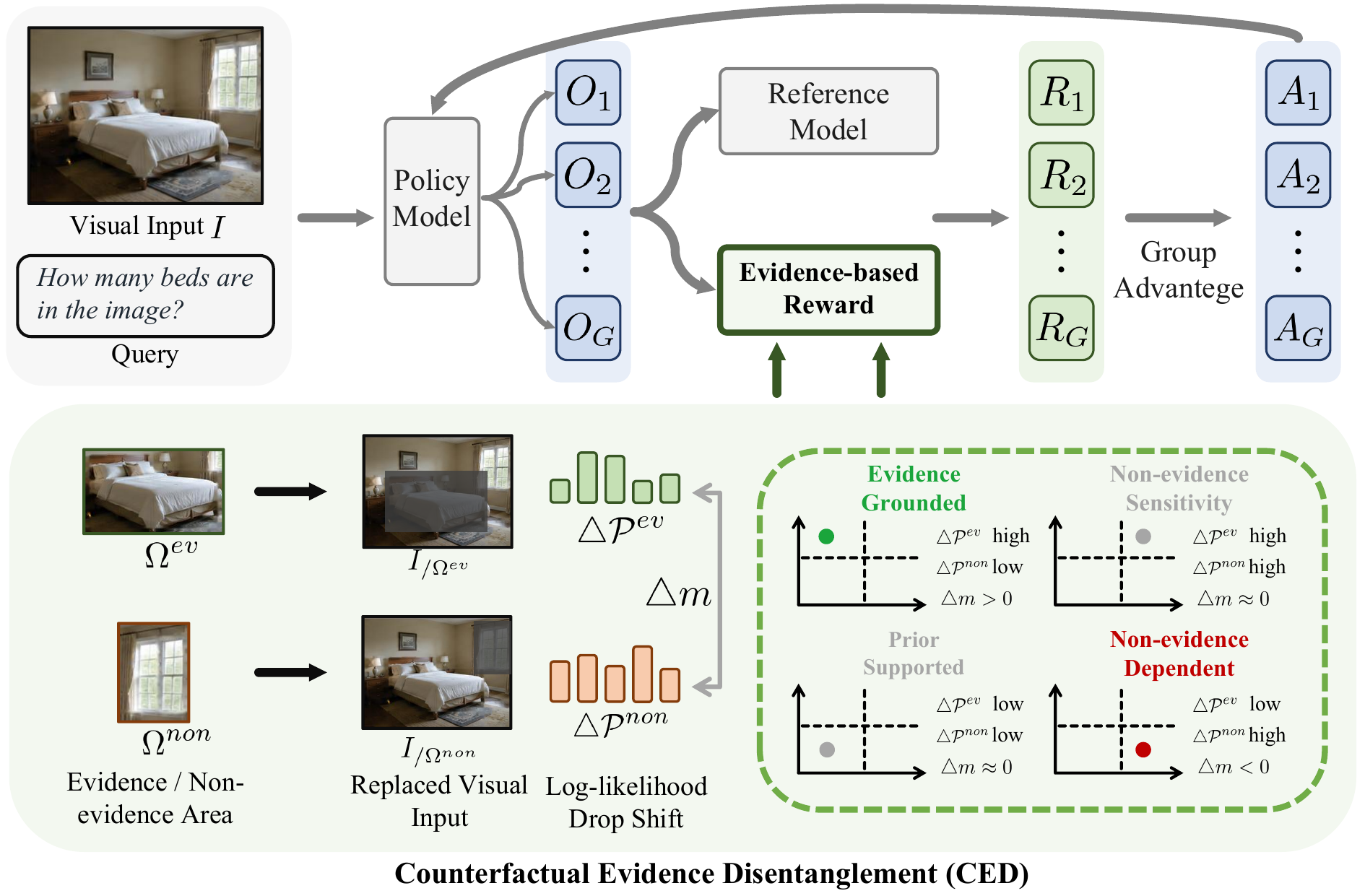}
    \vspace{-2mm}
    \caption{
    Overview of \sysname{} during post-training.
    A policy samples candidate answers, and \ced{} scores each answer by applying the same feature-space intervention to the proposed Evidence Region and to matched non-evidence Regions. The contrastive support drop yields an evidence margin, which is combined with correctness to form the GRPO reward. The counterfactual audit is used only during training; inference uses the trained VLM normally.}
    \label{fig:method-overview}
\end{figure}

\subsection{\ced{}: Counterfactual Evidence Margin}
\label{sec:intervention}

Correctness alone leaves open whether an answer is supported by the image or by language priors. We therefore use ``grounding'' in a narrow sense: an answer is grounded when its support depends on a specific locus of visual evidence. \ced{} turns this evidence sensitivity into a training signal complementary to answer correctness.

Given image $I$, question $q$, and candidate answer $y$, we identify an Evidence Region $\Omega^{\mathrm{ev}}$ hypothesized to contain answer-relevant evidence and $K$ non-evidence Regions $\{\Omega^{\mathrm{non}}_k\}_{k=1}^{K}$. For a region $\Omega$, we intervene after spatial merging by replacing its visual tokens with the mean $\boldsymbol{\mu}_T$ of neighboring tokens:
\begin{equation}
    \tilde{\mathbf{h}}_i =
    \begin{cases}
        \boldsymbol{\mu}_T & \text{if } i \in \mathcal{T}(\Omega), \\
        \mathbf{h}_i & \text{otherwise},
    \end{cases}
\end{equation}
where $\mathcal{T}(\Omega)$ maps a spatial region to visual-token indices. Feature-space mean replacement removes region-specific information while keeping the local representation manifold largely intact, reducing artifacts compared with zeroing or noise injection. We evaluate these alternatives in \Cref{sec:perturbation-ablation} and give a mechanistic account in \Cref{app:why-mean}.

The evidence sensitivity of $y$ to region $\Omega$ is defined as the counterfactual log-likelihood drop:
\begin{equation}
\label{eq:sensitivity}
    s(\Omega) \;=\; \log \pi_\theta(y \mid I, q) \;-\; \log \pi_\theta(y \mid \tilde{I}_{\setminus\Omega}, q).
\end{equation}
If the model can support $y$ without inspecting $\Omega$, then $s(\Omega)$ remains close to zero. To obtain a bounded and contrastive score, we compare the evidence-region sensitivity with reference-region sensitivities:
\begin{equation}
\label{eq:margin}
    m(I,q,y) \;=\; \tanh \!\left( \frac{s(\Omega^{\mathrm{ev}}) - \mu(s^{\mathrm{non}})}{\sigma(s^{\mathrm{non}}) + \epsilon} \right),
\end{equation}
where $s^{\mathrm{non}}=\{s(\Omega^{\mathrm{non}}_k)\}_{k=1}^{K}$, and $\mu(s^{\mathrm{non}})$ and $\sigma(s^{\mathrm{non}})$ are the mean and standard deviation over non-evidence Regions. A positive margin indicates that the candidate answer depends more on the proposed Evidence Region than on area-matched non-evidence Regions.

\paragraph{Non-evidence Regions as a local null.}
Non-evidence Regions are not treated as annotated negatives. They define a sample-local null distribution for how much the candidate answer changes under the same feature-space intervention applied to regions that are area-matched and spatially separated from the proposed Evidence Region. Thus, if a response is sensitive mainly to generic masking artifacts or irrelevant visual context, the Evidence and non-evidence interventions should induce comparable support drops. A high margin is obtained only when the Evidence Region produces a larger drop than this non-evidence distribution. In implementation, the $K$ non-evidence Regions jointly estimate this baseline rather than forming one-to-one evidence--non-evidence pairs.

\paragraph{Response-conditioned local interventions.}
\ced{} is designed to isolate the target-evidence path in \Cref{fig:causal-graph}. It scores the sampled answer $y$ itself, measuring the likelihood drop of that response rather than a distribution-level shift. It also intervenes locally, targeting the region that should carry the relevant evidence instead of perturbing the whole image. Finally, the intervention is applied in feature space after spatial merging, which preserves the representation outside the intervened region and reduces pixel-level encoder artifacts. These choices make the score answer-specific, evidence-localized, and less confounded by intervention noise.
\paragraph{Weakly-supervised evidence proposals.}
The Evidence Region $\Omega^{\mathrm{ev}}$ is a weak spatial proposal resolved from dataset metadata, using object-level COCO annotations~\citep{lin2014microsoft} in our experiments. Such boxes provide spatial priors: the same object box may or may not contain evidence depending on the question. Reference Regions are sampled to match the intervention scale while avoiding the proposed Evidence Region. \ced{} resolves this proposal ambiguity by comparing the counterfactual sensitivity of $\Omega^{\mathrm{ev}}$ against matched Reference Regions, so the evidence signal is determined by model behavior which are expected to transfer to various downstream tasks. Proposal-layer reliability is evaluated in \Cref{sec:boundary}.

\subsection{\sysname{}: CED-Guided GRPO}
\label{sec:ced-in-grpo}

Having defined the evidence margin $m$, we integrate it into GRPO as a correctness-anchored reward. The resulting pipeline, including \ced{} reward shaping and binary-action routing, is denoted by \sysname{}.

\paragraph{Correctness-anchored reward.}
The training reward is:
\begin{equation}
\label{eq:reward}
    R_{\text{train}} = R_{\text{ans}}\cdot g(m) + \varepsilon_{\text{tie}}\,m,
    \qquad
    g(m)=\tfrac{1}{2}\!\left(1 + \tanh\!\left(\tfrac{m}{\tau_g}\right)\right),
\end{equation}
where $\tau_g{=}0.20$, $\varepsilon_{\text{tie}}{=}0.10$, and $R_{\text{ans}}$ is the answer-correctness score defined in Appendix~\ref{app:reproducibility}. When the answer is correct, $g(m)$ favors responses with stronger Evidence Region support. When the answer is wrong and $R_{\text{ans}}\to 0$, the evidence term becomes a bounded tiebreaker, with $|\varepsilon_{\text{tie}}m|\leq\varepsilon_{\text{tie}}$. Thus, after within-group normalization $A_i=(R_{\text{train},i}-\mu)/\sigma$, correctness remains the dominant gradient driver, while \ced{} ranks equally correct rollouts by evidence support.

\section{Experiments}
\label{sec:experiments}

The experiments are designed to validate four properties of our method:(1)the counterfactual reward provides a usable evidence-dependence signal;(2)RL with this signal improves benchmark performance against matched-backbone baselines;(3)the gains transfer across backbones,;(4)the signal remains stable under intervention and proposal perturbations. Unless explicitly stated, ours denotes the Answer-CED variant. The main head-to-head comparison uses Qwen2.5-VL-7B~\citep{bai2025qwen25vltechnicalreport} as the base model for ours; cross-backbone validation further covers Qwen2.5-VL-3B/7B~\citep{bai2025qwen25vltechnicalreport}, Qwen3-VL-8B~\citep{bai2025qwen3}, and Qwen3.5-9B~\citep{qwen35blog}. For reference, the backbone rows also include LLaVA-v1.6-7B~\citep{li2024llava} and InternVL3.5-8B~\citep{wang2025internvl3}. Matched baselines include recent RL-based VLM post-training methods~\citep{yue2025vapo,huang2025vppo,xiao2025perceptionr1,wang2025papo,shen2025vlm,fan2025sophiavl}. Signal-validation diagnostics use Qwen3-VL-8B-Instruct as a frozen probe. Evaluation covers nine public benchmarks spanning grounding, faithfulness, and general reasoning: CountBench~\citep{paiss2023teaching}, SpatialEval~\citep{wang2024picture}, HallusionBench~\citep{guan2024hallusionbench}, VLMsAreBlind~\citep{rahmanzadehgervi2024vision}, FREAK~\citep{yin2026freak}, MathVista~\citep{lu2023mathvista}, MMBench~\citep{liu2024mmbench}, MMMU~\citep{yue2024mmmu}, and ScienceQA~\citep{lu2022learn}, with no overlap between training images and evaluation benchmarks.

\subsection{Validating the Evidence-Dependence Signal}
\label{sec:signal-validity}

We first check whether the reward separates evidence dependence from correctness-only. On counting tasks, 99.5\% of groups have non-constant rewards and 90.0\% contain same-answer trajectories with different rewards, showing that the signal preserves within-group discrimination beyond answer-string matching. Presence tasks show higher zero variance, consistent with their two-action yes/no structure discussed in \Cref{sec:ced-in-grpo}. Cross-model validation and blindfold tests are reported in Appendix~\ref{app:signal-extended}.

\begin{table}[t]
\centering
\small
\setlength{\tabcolsep}{4pt}
\renewcommand{\arraystretch}{1.02}
\setlength{\abovecaptionskip}{4pt}
\setlength{\belowcaptionskip}{6pt}
\caption{Signal-validation statistics on counting and presence tasks.}
\vspace{-2mm}
\label{tab:signal-validation}
\begin{tabular*}{\linewidth}{@{\extracolsep{\fill}}l c c c c c@{}}
\toprule
Task & \shortstack{Non-constant\\raw reward} & \shortstack{Zero\\variance} & \shortstack{Mean reward\\std} & \shortstack{Same answer,\\different reward} & \shortstack{Mean\\negation} \\
\midrule
Counting & \textbf{99.5\%} & 24.0\% & 0.1299 & \textbf{90.0\%} & 0.003 \\
Presence & 21.1\% & 79.0\% & 0.0497 & 5.0\% & 0.777 \\
\bottomrule
\end{tabular*}
\vspace{-3mm}
\end{table}

\subsection{Main Results on Nine Benchmarks}
\label{sec:main-results}
\label{sec:benchmark-transfer}
\label{sec:extended-eval}

\Cref{tab:extended-benchmark} compares ours with VLM backbones and RL-based baselines on nine benchmarks. Benchmarks are grouped into \emph{Grounding} and \emph{General Reasoning}. Each $\Delta$ reports the absolute gain over the corresponding base model, which is Qwen2.5-VL-7B unless the row label specifies another base.

\begin{table}[t]
\centering
\scriptsize
\setlength{\tabcolsep}{3.4pt}
\renewcommand{\arraystretch}{0.92}
\setlength{\abovecaptionskip}{4pt}
\setlength{\belowcaptionskip}{6pt}
\caption{Nine-benchmark evaluation using Qwen2.5-VL-7B as the base model for Ours. Backbone, baseline, and benchmark sources are cited in the setup paragraph. Best \textbf{bold}; second-best \underline{underlined}. Each $\Delta$ row reports absolute gain over the corresponding base.}
\vspace{-2mm}
\label{tab:extended-benchmark}
\resizebox{\linewidth}{!}{%
\begin{tabular}{@{}l ccccc cccc c@{}}
\toprule
& \multicolumn{5}{c}{\textbf{Grounding}}
& \multicolumn{4}{c}{\textbf{General Reasoning}} & \\
\cmidrule(lr){2-6} \cmidrule(lr){7-10}
\textbf{Model}
& CountBench & SpatialEval & Hallusion & \shortstack{VLMs\\AreBlind} & FREAK
& MathVista & MMBench & MMMU & ScienceQA & \textbf{Avg} \\
\midrule
\multicolumn{11}{l}{\emph{VLM backbones}} \\
LLaVA-v1.6-7B
  & 55.60 & 23.60 & 51.00 & 28.23 & 9.50
  & 22.80 & 71.30 & 29.00 & 57.20 & 38.69 \\
InternVL3.5-8B
  & 86.90 & 24.00 & 69.60 & \textbf{54.92} & 12.80
  & 51.90 & 83.40 & 50.80 & \textbf{89.90} & 58.25 \\
Qwen2.5-VL-3B
  & 70.71 & 53.94 & 65.37 & 42.46 & 11.14
  & 52.10 & 80.40 & 50.29 & 74.93 & 55.70 \\
Qwen2.5-VL-7B
  & 81.82 & 59.21 & 68.64 & 46.44 & 12.18
  & 63.10 & 84.73 & 50.63 & 83.98 & 61.19 \\
\midrule
\multicolumn{11}{l}{\emph{RL-based methods}} \\
VAPO-Thinker-7B
  & 86.90 & 60.80 & \textbf{71.20} & 48.62 & 13.10
  & 51.10 & 81.80 & 44.60 & 82.30 & 60.05 \\
\quad $\Delta$
  & \dpos{+5.10} & \dpos{+1.59} & \dpos{+2.60} & \dpos{+2.18} & \dpos{+0.92}
  & \dneg{-12.00} & \dneg{-2.90} & \dneg{-6.00} & \dneg{-1.70}
  & \dneg{-1.14} \\
VPPO-7B
  & 85.90 & 61.80 & 68.20 & 49.35 & 12.70
  & \underline{67.90} & 84.90 & 52.10 & \underline{88.40} & 63.47 \\
\quad $\Delta$
  & \dpos{+4.10} & \dpos{+2.59} & \dneg{-0.40} & \dpos{+2.91} & \dpos{+0.52}
  & \dpos{+4.80} & \dpos{+0.20} & \dpos{+1.50} & \dpos{+4.40}
  & \dpos{+2.28} \\
Perception-R1-7B
  & 84.90 & 59.70 & 66.70 & 45.22 & 12.20
  & 67.10 & 81.70 & 48.10 & 82.30 & 60.88 \\
\quad $\Delta$
  & \dpos{+3.10} & \dpos{+0.49} & \dneg{-1.90} & \dneg{-1.22} & \dpos{+0.02}
  & \dpos{+4.00} & \dneg{-3.00} & \dneg{-2.50} & \dneg{-1.70}
  & \dneg{-0.31} \\
PAPO-G-H
  & \textbf{90.90} & \textbf{64.80} & 69.50 & 47.92 & 12.80
  & \textbf{69.40} & 82.80 & 50.50 & 85.60 & 63.80 \\
\quad $\Delta$
  & \dpos{+9.10} & \dpos{+5.59} & \dpos{+0.90} & \dpos{+1.48} & \dpos{+0.62}
  & \dpos{+6.30} & \dneg{-1.90} & \dneg{-0.10} & \dpos{+1.60}
  & \dpos{+2.61} \\
VLM-R1
  & 74.80 & 56.40 & 66.60 & 44.15 & 11.00
  & 60.30 & 79.90 & 48.60 & 73.60 & 57.26 \\
\quad $\Delta$ vs.\ Qwen2.5-VL-3B
  & \dpos{+4.10} & \dpos{+2.50} & \dpos{+1.20} & \dpos{+1.69} & \dneg{-0.14}
  & \dpos{+8.20} & \dneg{-0.50} & \dneg{-1.70} & \dneg{-1.30}
  & \dpos{+1.56} \\
SophiaVL-R1
  & 82.83 & 61.38 & 66.87 & 47.09 & \underline{24.18}
  & 66.30 & \underline{87.41} & \underline{52.18} & 87.64 & \underline{63.99} \\
\quad $\Delta$
  & \dpos{+1.01} & \dpos{+2.17} & \dneg{-1.77} & \dpos{+0.65} & \dpos{+12.00}
  & \dpos{+3.20} & \dpos{+2.68} & \dpos{+1.55} & \dpos{+3.66}
  & \dpos{+2.80} \\
\midrule
\textbf{Ours}
  & \underline{88.89} & \underline{63.34} & \underline{70.08} & \underline{54.02} & \textbf{26.40}
  & \textbf{69.40} & \textbf{87.53} & \textbf{57.47} & 87.08 & \textbf{67.13} \\
\quad $\Delta$
  & \dpos{+7.07} & \dpos{+4.13} & \dpos{+1.44} & \dpos{+7.58} & \dpos{+14.22}
  & \dpos{+6.30} & \dpos{+2.80} & \dpos{+6.84} & \dpos{+3.10}
  & \dposb{+5.94} \\
\bottomrule
\end{tabular}%
}
\end{table}

Ours obtains the highest average score and the largest mean improvement in the matched-backbone comparison. It is also the only RL method with non-negative $\Delta$ on all nine benchmarks. The largest gains appear on perception-heavy benchmarks, including VLMsAreBlind, FREAK, and MMMU, while MathVista, MMBench, and ScienceQA remain positive. This pattern supports the intended role of the reward: increasing reliance on visual evidence without sacrificing general transfer.

\subsection{Ablations and Diagnostics}
\label{sec:ced-isolation}

Having established the aggregate gains, we now diagnose their source. We first isolate the CED contribution under strictly controlled conditions (\Cref{sec:isolating-ced}), then examine cross-backbone generalization and the Answer-vs-CoT design choice (\Cref{sec:generalization-variant}), and finally stress-test the reward signal against proposal and intervention perturbations (\Cref{sec:signal-robustness}).

\subsubsection{Isolating the CED Contribution}
\label{sec:isolating-ced}

\paragraph{Is the CED reward necessary?}
\Cref{tab:ced-isolation} compares correctness-only, retrained VPPO, retrained PAPO, and Answer-CED under matched conditions: the same Qwen3.5-9B backbone, 15,314-sample training set, router, 2,000-step schedule, and LoRA configuration. Without CED, the same data and compute do not produce transferable improvements. Correctness-only and VPPO produce gains on several grounding benchmarks but substantial negative transfer to general reasoning, yielding net-negative average deltas. PAPO becomes unstable under its global-perturbation KL objective, with 25\% repeated-segment rate and 58\% budget exhaustion at 24k tokens. Answer-CED improves all nine benchmarks over correctness-only ($+12.58$ average) and exceeds matched VPPO by $+13.12$. CED uniquely converts the same data and compute into transferable improvements by restoring evidence-specific within-group discrimination among equally correct rollouts.

\Cref{fig:worked-example} illustrates this mechanism. Two rollouts from the same GRPO group both answer ``4'' correctly: one relies on the prior \emph{sedan $= 2{+}2$} without inspecting the image, while the other detects the unusual wheel configuration ($1$ front $+ 2$ rear $+ 1$ spare) through visual evidence. Correctness-only reward produces no ranking signal while the \ced{} gate $g(m)$ multiplies $R_{\mathrm{ans}}$ by the evidence margin, creating a $7\times$ reward gap that GRPO uses to select the grounded trajectory.

\begin{table}[t]
\centering
\scriptsize
\setlength{\tabcolsep}{3.0pt}
\renewcommand{\arraystretch}{0.96}
\caption{Controlled comparison on Qwen3.5-9B with matched data, router, and compute. $\Delta$ is the mean gain over the frozen base; $\Delta_{\text{CED}}$ is Answer-CED minus correctness-only.}
\label{tab:ced-isolation}
\resizebox{\linewidth}{!}{%
\begin{tabular}{@{}l ccccc cccc cc@{}}
\toprule
& \multicolumn{5}{c}{\textbf{Grounding}}
& \multicolumn{4}{c}{\textbf{General Reasoning}} & & \\
\cmidrule(lr){2-6} \cmidrule(lr){7-10}
\textbf{Method}
& CountBench & SpatialEval & Hallusion & \shortstack{VLMs\\AreBlind} & FREAK
& MathVista & MMBench & MMMU & ScienceQA & \textbf{Avg} & \textbf{$\Delta$} \\
\midrule
$\lambda{=}0$ (corr.-only)
& 89.90 & 40.45 & 57.93 & 72.90 & 18.73
& 73.40 & 79.44 & 36.00 & 82.24 & 61.22 & \dneg{-1.24} \\
VPPO (retrained)
& 86.87 & 38.32 & 58.37 & 72.97 & 19.07
& 72.50 & 79.62 & 37.33 & 81.09 & 60.68 & \dneg{-1.78} \\
PAPO (retrained)$^\dagger$
& 86.87 & 8.59 & 58.46 & 69.96 & 19.40
& 14.00 & 44.87 & 33.22 & 34.24 & 41.07 & \dneg{-21.39} \\
\midrule
\textbf{Answer-CED}
& \textbf{93.90} & \textbf{54.35} & \textbf{78.10} & \textbf{73.10} & \textbf{23.12}
& \textbf{83.75} & \textbf{88.58} & \textbf{75.67} & \textbf{93.61} & \textbf{73.80} & \dposb{+11.34} \\
$\Delta_{\text{CED}}$
& \dpos{+4.00} & \dpos{+13.90} & \dpos{+20.17} & \dpos{+0.20} & \dpos{+4.39}
& \dpos{+10.35} & \dpos{+9.14} & \dpos{+39.67} & \dpos{+11.37} & \dposb{+12.58} & -- \\
\bottomrule
\end{tabular}%
}
\begin{flushleft}
\vspace{-1mm}
\footnotesize $^\dagger$ PAPO's global-perturbation KL reward induces repetitive reasoning loops on Qwen3.5-9B (25\% repeated-segment rate, 58\% budget exhaustion at 24k tokens).
\end{flushleft}
\vspace{-2mm}
\end{table}

\begin{figure}[t]
    \centering
    \includegraphics[width=\linewidth]{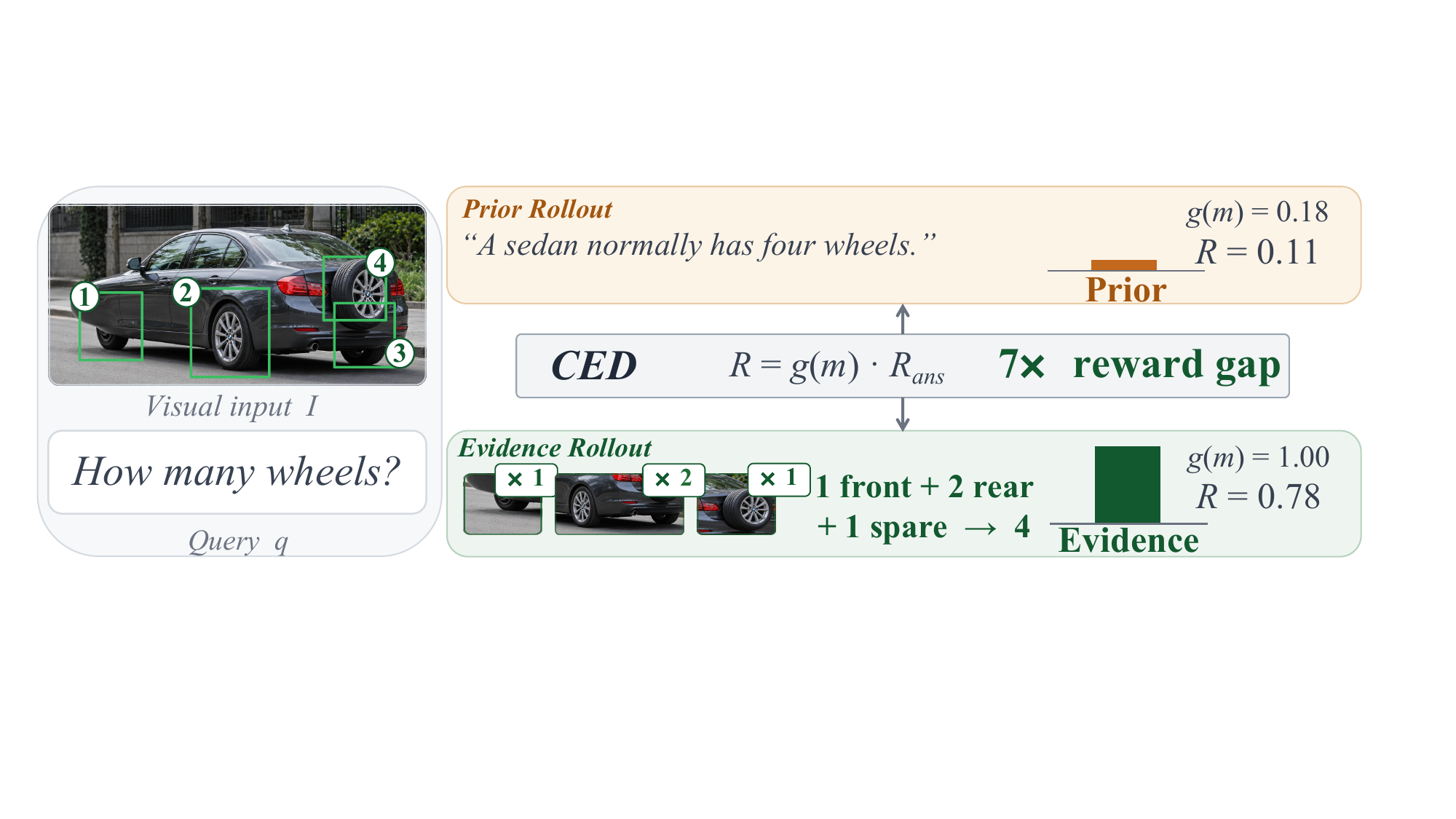}
    \vspace{-5mm}
    \caption{
A case showing why correctness-only reward is insufficient.
Two rollouts from the same GRPO group both answer ``4'' correctly,
but the \ced{} gate $g(m)$ creates a $7\times$ reward gap
between a prior-based rollout ($g(m){=}0.18$, $R{=}0.11$)
and an evidence-grounded rollout ($g(m){=}1.00$, $R{=}0.78$),
enabling GRPO to select the grounded trajectory.}
    \label{fig:worked-example}
\end{figure}

\subsubsection{Generalization and Variant Analysis}
\label{sec:generalization-variant}

\paragraph{Does CED transfer across backbones?}
\label{sec:cross-backbone}
We apply the same recipe to four from two model families. \Cref{tab:cross-backbone-ced} reports frozen-base scores, ours, and per-benchmark gains. Our method yields positive mean improvement on every backbone and non-negative gains on all 36 benchmark--backbone cells.

\begin{table}[t]
\centering
\scriptsize
\setlength{\tabcolsep}{3.4pt}
\renewcommand{\arraystretch}{0.84}
\setlength{\abovecaptionskip}{2pt}
\setlength{\belowcaptionskip}{3pt}
\caption{Cross-backbone validation of Ours. Each block reports the frozen base, Ours, and absolute $\Delta$ against that block's base. The Qwen3.5-9B block additionally reports 3-seed mean$\pm$std, with its mean $\Delta$ in the last column.}
\vspace{-2pt}
\label{tab:cross-backbone-ced}
\resizebox{\linewidth}{!}{%
\begin{tabular}{@{}l ccccc cccc c@{}}
\toprule
& \multicolumn{5}{c}{\textbf{Grounding}}
& \multicolumn{4}{c}{\textbf{General Reasoning}} & \\
\cmidrule(lr){2-6} \cmidrule(lr){7-10}
\textbf{Model}
& CountBench & SpatialEval & Hallusion & \shortstack{VLMs\\AreBlind} & FREAK
& MathVista & MMBench & MMMU & ScienceQA & \textbf{Avg.\ $\Delta$} \\
\midrule

\multicolumn{11}{l}{\emph{Backbone: Qwen2.5-VL-3B}} \\
Qwen2.5-VL-3B
  & \textbf{70.71} & 53.94 & 65.37 & 42.46 & 11.14
  & 52.10 & 80.40 & 50.29 & 74.93 & -- \\
\textbf{Ours}
  & \textbf{70.71} & \textbf{55.56} & \textbf{67.40} & \textbf{49.51} & \textbf{25.51}
  & \textbf{60.40} & \textbf{82.55} & \textbf{50.57} & \textbf{80.43} & -- \\
\quad $\Delta$
  & \dpos{+0.00} & \dpos{+1.62} & \dpos{+2.03} & \dpos{+7.05} & \dpos{+14.37}
  & \dpos{+8.30} & \dpos{+2.15} & \dpos{+0.28} & \dpos{+5.50}
  & \dposb{+4.59} \\

\midrule
\multicolumn{11}{l}{\emph{Backbone: Qwen2.5-VL-7B}} \\
Qwen2.5-VL-7B
  & 81.82 & 59.21 & 68.64 & 46.44 & 12.18
  & 63.10 & 84.73 & 50.63 & 83.98 & -- \\
\textbf{Ours}
  & \textbf{88.89} & \textbf{63.34} & \textbf{70.08} & \textbf{54.02} & \textbf{26.40}
  & \textbf{69.40} & \textbf{87.53} & \textbf{57.47} & \textbf{87.08} & -- \\
\quad $\Delta$
  & \dpos{+7.07} & \dpos{+4.13} & \dpos{+1.44} & \dpos{+7.58} & \dpos{+14.22}
  & \dpos{+6.30} & \dpos{+2.80} & \dpos{+6.84} & \dpos{+3.10}
  & \dposb{+5.94} \\

\midrule
\multicolumn{11}{l}{\emph{Backbone: Qwen3-VL-8B-Instruct}} \\
Qwen3-VL-8B
  & 94.90 & 65.26 & 72.86 & 67.01 & 28.85
  & 67.54 & 89.15 & 57.82 & 91.99 & -- \\
\textbf{Ours}
  & \textbf{95.92} & \textbf{67.75} & \textbf{74.30} & \textbf{68.88} & \textbf{31.18}
  & \textbf{68.04} & \textbf{89.58} & \textbf{58.85} & \textbf{92.82} & -- \\
\quad $\Delta$
  & \dpos{+1.02} & \dpos{+2.49} & \dpos{+1.44} & \dpos{+1.87} & \dpos{+2.33}
  & \dpos{+0.50} & \dpos{+0.43} & \dpos{+1.03} & \dpos{+0.83}
  & \dposb{+1.33} \\

\midrule
\multicolumn{11}{l}{\emph{Backbone: Qwen3.5-9B}} \\
Qwen3.5-9B
  & 80.80 & 52.73 & 47.20 & 46.44 & 9.95
  & 80.00 & 88.03 & 66.21 & 90.78 & -- \\
\textbf{Ours}
  & \textbf{93.90} & \textbf{54.35} & \textbf{78.10} & \textbf{73.10} & \textbf{23.12}
  & \textbf{83.75} & \textbf{88.58} & \textbf{75.67} & \textbf{93.61} & -- \\
\quad $\Delta$
  & \dpos{+13.10} & \dpos{+1.62} & \dpos{+30.90} & \dpos{+26.66} & \dpos{+13.17}
  & \dpos{+3.75} & \dpos{+0.55} & \dpos{+9.46} & \dpos{+2.83}
  & \dposb{+11.34} \\
\textbf{Ours (3 seeds)}
  & \shortstack{\textbf{93.60}\\$\pm0.48$}
  & \shortstack{\textbf{56.04}\\$\pm0.27$}
  & \shortstack{\textbf{78.35}\\$\pm0.21$}
  & \shortstack{\textbf{73.37}\\$\pm0.17$}
  & \shortstack{\textbf{23.11}\\$\pm0.40$}
  & \shortstack{\textbf{84.40}\\$\pm0.50$}
  & \shortstack{\textbf{89.87}\\$\pm0.04$}
  & \shortstack{\textbf{75.39}\\$\pm0.22$}
  & \shortstack{\textbf{94.46}\\$\pm0.13$}
  & \dposb{+11.83} \\
\bottomrule
\end{tabular}%
}
\end{table}

\paragraph{Which CED variant should be used: Answer or CoT?}
The default Answer variant directly rewards image-to-answer dependence. To isolate the effect of where the counterfactual margin is applied, \Cref{tab:answer-cot-compare} compares Answer and CoT variants on the same Qwen3.5-9B base. Answer obtains a slightly higher mean gain with stronger general transfer on downstream tasks like MMMU even though CoT obtains larger reward. This discrepancy reflects a structural mismatch between dense per-token rewards and GRPO's group-relative optimization. CoT-CED averages evidence margins over the full reasoning trace, and because trace length is policy-controlled, the model can raise the per-token average by truncating low-margin connective tokens rather than by strengthening visual grounding which is a reward-hacking pathway also documented in the process-reward literature \cite{guo2025deepseek, shao2024deepseekmath}. Tables~A.13--A.14 confirm this collapse: CoT-CED's mean chain shrinks to 3.6 tokens on counting, degenerating to object-cue shorthand. Answer-CED avoids this by scoring only the final-answer span, a fixed-length outcome interface that eliminates the chain-length degree of freedom. Full length and generation diagnostics are in Appendix~A.11.

\begin{table}[t]
\centering
\scriptsize
\setlength{\tabcolsep}{3.4pt}
\renewcommand{\arraystretch}{0.84}
\setlength{\abovecaptionskip}{2pt}
\setlength{\belowcaptionskip}{3pt}
\caption{Answer vs.\ CoT variants on Qwen3.5-9B; Avg.\ $\Delta$ is the mean gain over the base.}
\label{tab:answer-cot-compare}
\resizebox{\linewidth}{!}{%
\begin{tabular}{@{}l ccccc cccc c@{}}
\toprule
& \multicolumn{5}{c}{\textbf{Grounding}}
& \multicolumn{4}{c}{\textbf{General Reasoning}} & \\
\cmidrule(lr){2-6} \cmidrule(lr){7-10}
\textbf{Variant}
& CountBench & SpatialEval & Hallusion & \shortstack{VLMs\\AreBlind} & FREAK
& MathVista & MMBench & MMMU & ScienceQA & \textbf{Avg.\ $\Delta$} \\
\midrule
Answer
  & \textbf{93.90} & \textbf{54.35} & \textbf{78.10} & \textbf{73.10} & 23.12
  & \textbf{83.75} & \textbf{88.58} & \textbf{75.67} & 93.61
  & \dposb{+11.34} \\
CoT
  & 92.90 & 53.80 & \textbf{78.10} & 69.09 & \textbf{30.35}
  & 81.88 & 88.53 & 68.23 & \textbf{94.66}
  & \dpos{+10.60} \\
\bottomrule
\end{tabular}%
}
\end{table}

\Cref{fig:training-dynamics} shows why the default remains Answer-CED despite CoT-CED's stronger training-side signal. CoT-CED reaches higher reward and a denser positive margin during training, but this margin advantage does not yield stronger average transfer in \Cref{tab:answer-cot-compare}. The likely failure mode is that CoT-CED can raise a per-token margin by shortening the chain around visually salient object cues, whereas Answer-CED applies the counterfactual directly to the final answer. We therefore use Answer-CED as the default variant; additional length and generation diagnostics are in \Cref{app:cot-vs-answer}.

\begin{figure}[t]
    \centering
    \setlength{\abovecaptionskip}{3pt}
    \setlength{\belowcaptionskip}{4pt}
    \begin{minipage}[t]{0.48\linewidth}
        \centering
        \includegraphics[width=\linewidth]{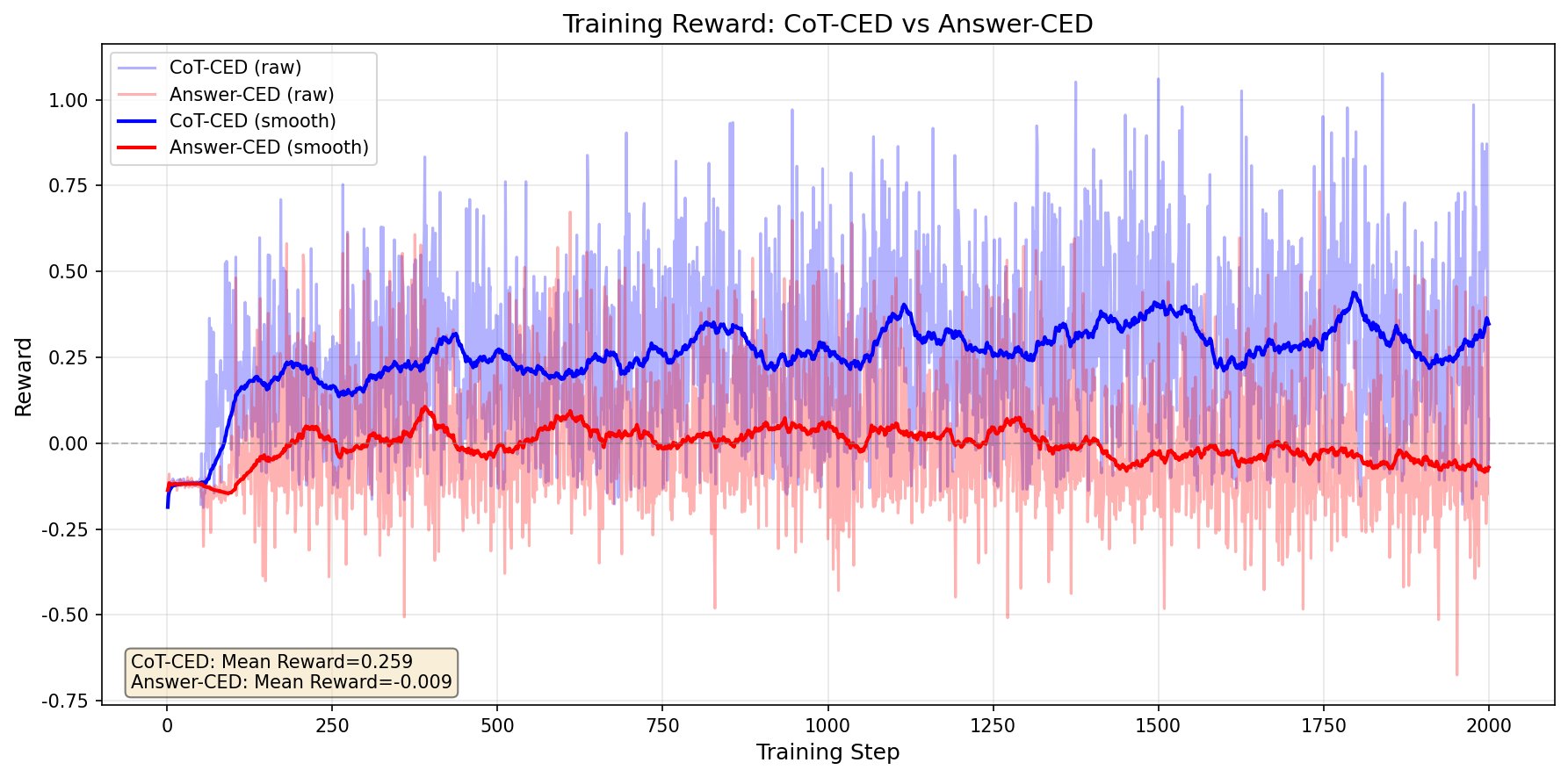}
        \par\vspace{1pt}
        {\footnotesize (a) Training reward}
    \end{minipage}
    \hfill
    \begin{minipage}[t]{0.48\linewidth}
        \centering
        \includegraphics[width=\linewidth]{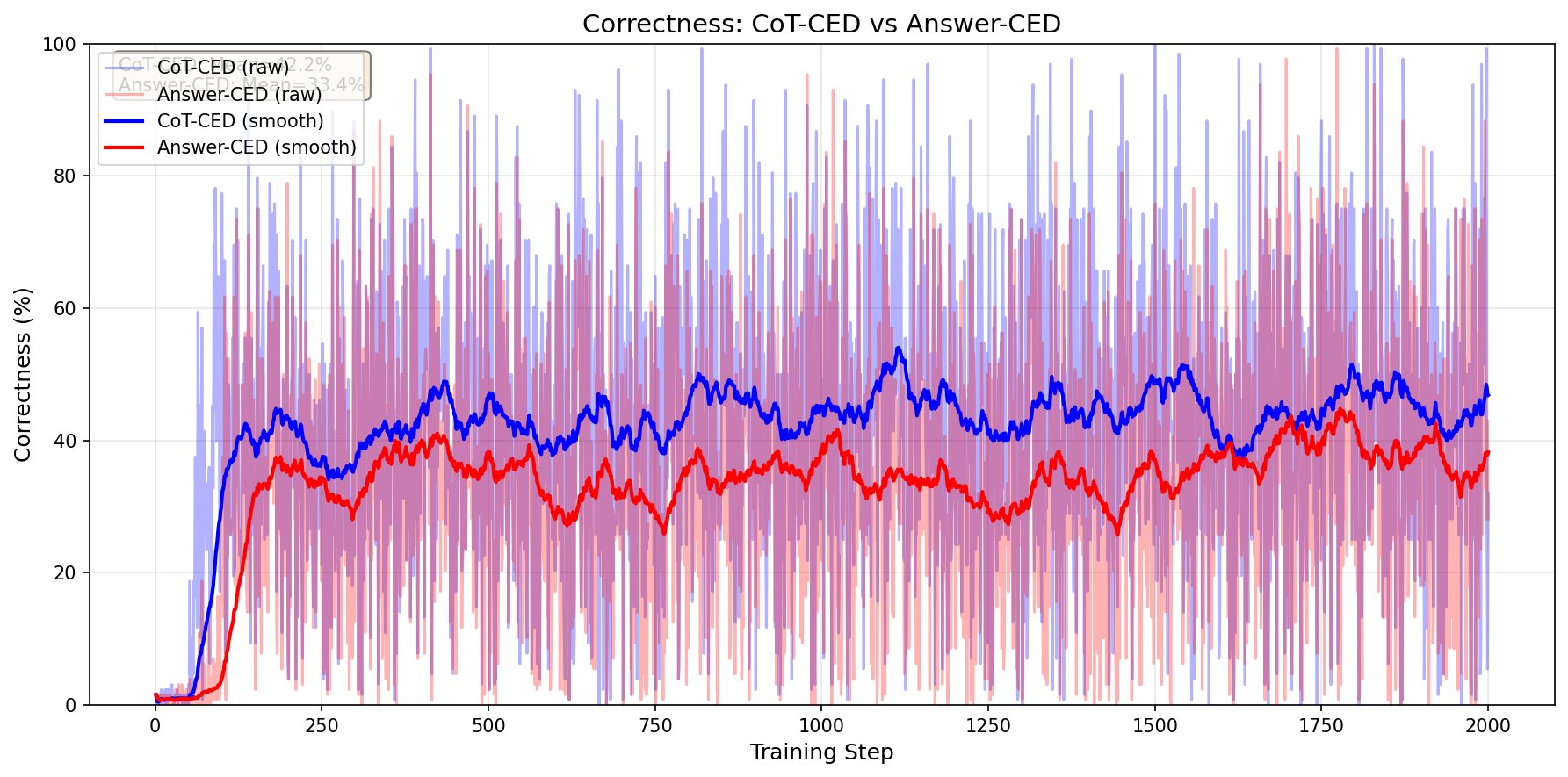}
        \par\vspace{1pt}
        {\footnotesize (b) Training correctness}
    \end{minipage}
    \caption{Training dynamics for Answer-CED and CoT-CED on Qwen3.5-9B. CoT-CED produces larger training-side rewards, but the downstream comparison in \Cref{tab:answer-cot-compare} favors Answer-CED on average; we therefore use Answer-CED as the default variant.}
    \label{fig:training-dynamics}
\end{figure}

\subsubsection{Reward Signal Robustness}
\label{sec:signal-robustness}

\paragraph{Is the reward signal robust to proposal quality?}
\label{sec:proposal-diagnostic}
\Cref{fig:proposal-robustness-main} evaluates whether \ced{}'s signal depends on proposal relevance or simply on masking magnitude. Panel~(a) compares the original COCO-based proposal with a random object box as $\Omega^{\mathrm{ev}}$ across seven task types. The evidence margin drops from $\bar m{=}0.268$ (COCO) to $\bar m{=}0.015$ (random box), and the relevant-beats-random rate drops from $0.636$ to $0.487$ (chance level). Panel~(b) shows IoU-graded degradation on the count-exclusion diagnostic: $\bar m$ decreases monotonically from $0.501$ at high IoU to $0.219$ at low IoU, approaching the random floor at $0.147$. Scale and shift perturbations are reported in \Cref{app:proposal-robustness}. These results confirm two properties: \ced{} does not reward masking magnitude, and the COCO-based proposal, despite being a weak spatial prior without question-specific annotation, provides sufficient evidence localization for a discriminative signal.

\begin{figure}[!htbp]
    \centering
    \includegraphics[width=\linewidth]{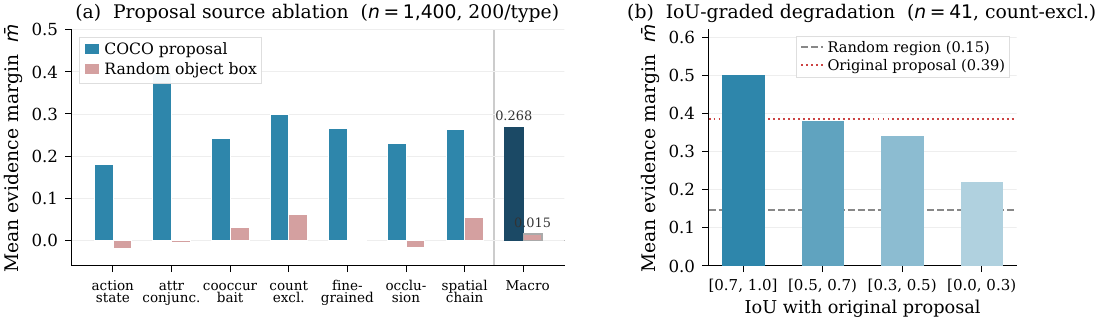}
    \vspace{-5mm}
    \caption{\ced{} probe robustness under proposal perturbation. \textbf{(a)}~Replacing the COCO proposal with a random object box as $\Omega^{\mathrm{ev}}$ collapses the evidence margin to near zero across all seven task types, confirming that the signal tracks proposal relevance rather than masking magnitude. \textbf{(b)}~IoU-graded degradation on the count-exclusion diagnostic: $\bar m$ decreases monotonically as spatial overlap with the original proposal drops, approaching the random-region floor at lowest IoU.}
    \label{fig:proposal-robustness-main}
\end{figure}

\paragraph{Which intervention type produces the strongest discrimination?}
\label{sec:perturbation-ablation}
\label{sec:overhead}
\label{sec:boundary}
Within local interventions, mean replacement gives the strongest discrimination, followed by zero replacement and Gaussian-noise replacement (AUC $0.669 > 0.641 > 0.629$ averaged across keying modes). Global random masking performs below chance, with an above-baseline rate of $0.490$, showing that ours depends on structured local counterfactuals rather than corruption magnitude alone. With the frozen pre-RL checkpoint, the Evidence Region signal is $1.534$, compared with $0.520$ for shifted regions, $0.319$ for wrong-category regions, and $0.566$ for random regions, consistent with the proposal-robustness results in \Cref{sec:proposal-diagnostic}. Full perturbation results and mechanistic analysis are in \Cref{app:perturbation-ablation,app:why-mean}.

\section{Discussion}
\label{sec:discussion}
\label{sec:self-evolve}

\paragraph{Visual evidence over language priors on FREAK.}
The FREAK gains make the same mechanism visible at the trajectory level. The representative cases in \Cref{fig:qualitative-prior-cases} contain images whose visual evidence contradicts a common language prior. PAPO and LLaVA-v1.6-7B follow the prior-consistent answer, whereas the \ced{}-trained model recovers the image-grounded answer. Thus, the qualitative rollouts instantiate the rerouting predicted by \Cref{fig:causal-graph}: answers that would otherwise be driven by shortcut priors are redirected toward visual evidence. The selected cases are typical members of this failure regime rather than engineered edge cases.

\begin{figure}[t]
\centering
\setlength{\tabcolsep}{1.3pt}
\renewcommand{\arraystretch}{0.96}
\newcommand{\freakcase}[6]{%
\begin{minipage}[t]{0.192\linewidth}
\centering
\includegraphics[height=0.88in,width=0.98\linewidth,keepaspectratio]{#1}\par\vspace{1.4pt}
{\scriptsize\bfseries #2}\par\vspace{1.4pt}
\begin{minipage}{0.98\linewidth}
\scriptsize\raggedright
\textit{Q:} #3\\[0.5pt]
\textit{GT:} #4\\[0.5pt]
\textit{Other:} #5\\[0.5pt]
\textit{ours:} \textbf{#6}
\end{minipage}
\end{minipage}%
}
\begin{tabular}{@{}ccccc@{}}
\freakcase{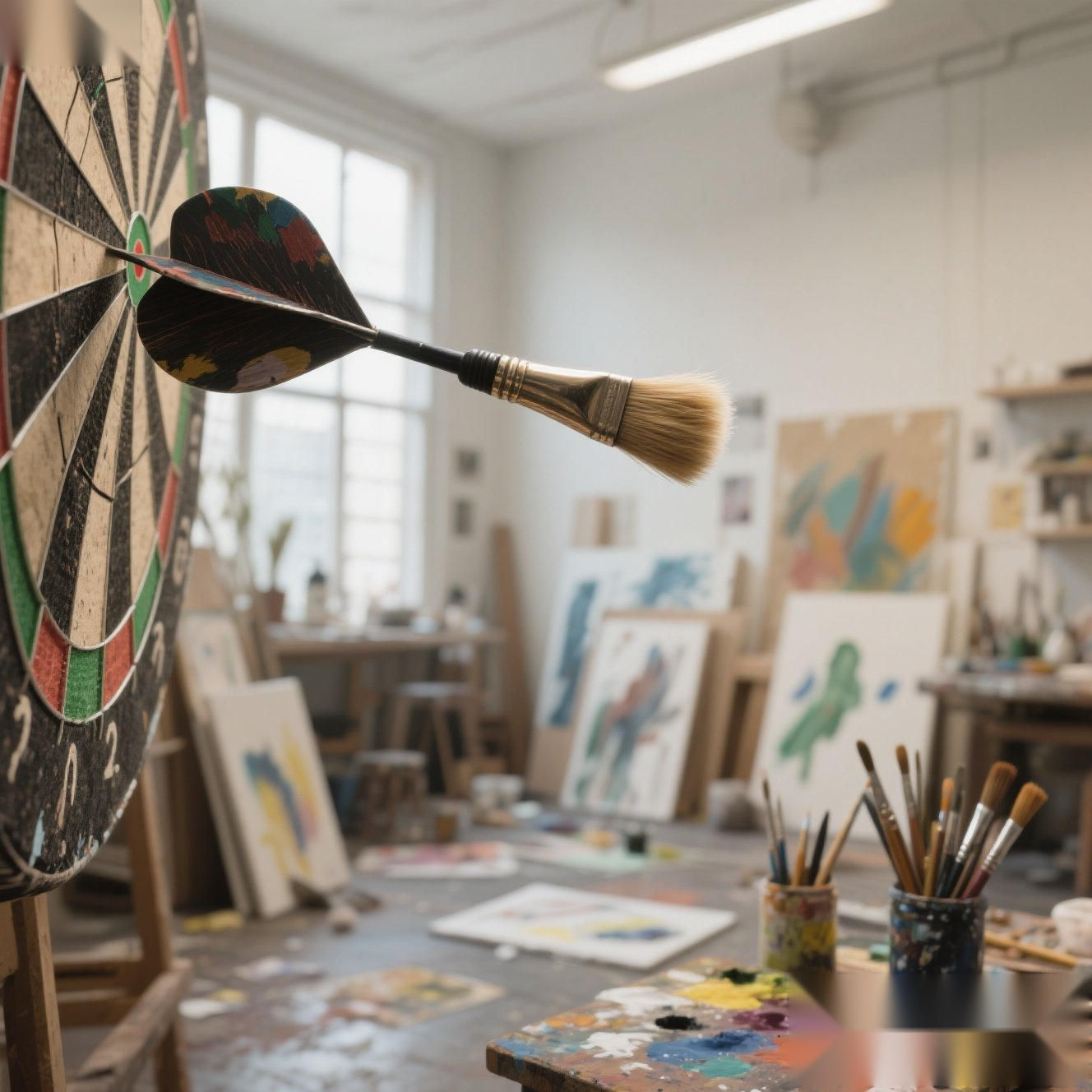}{FREAK-548}{point or tail?}{tail}{point/point/head}{tail} &
\freakcase{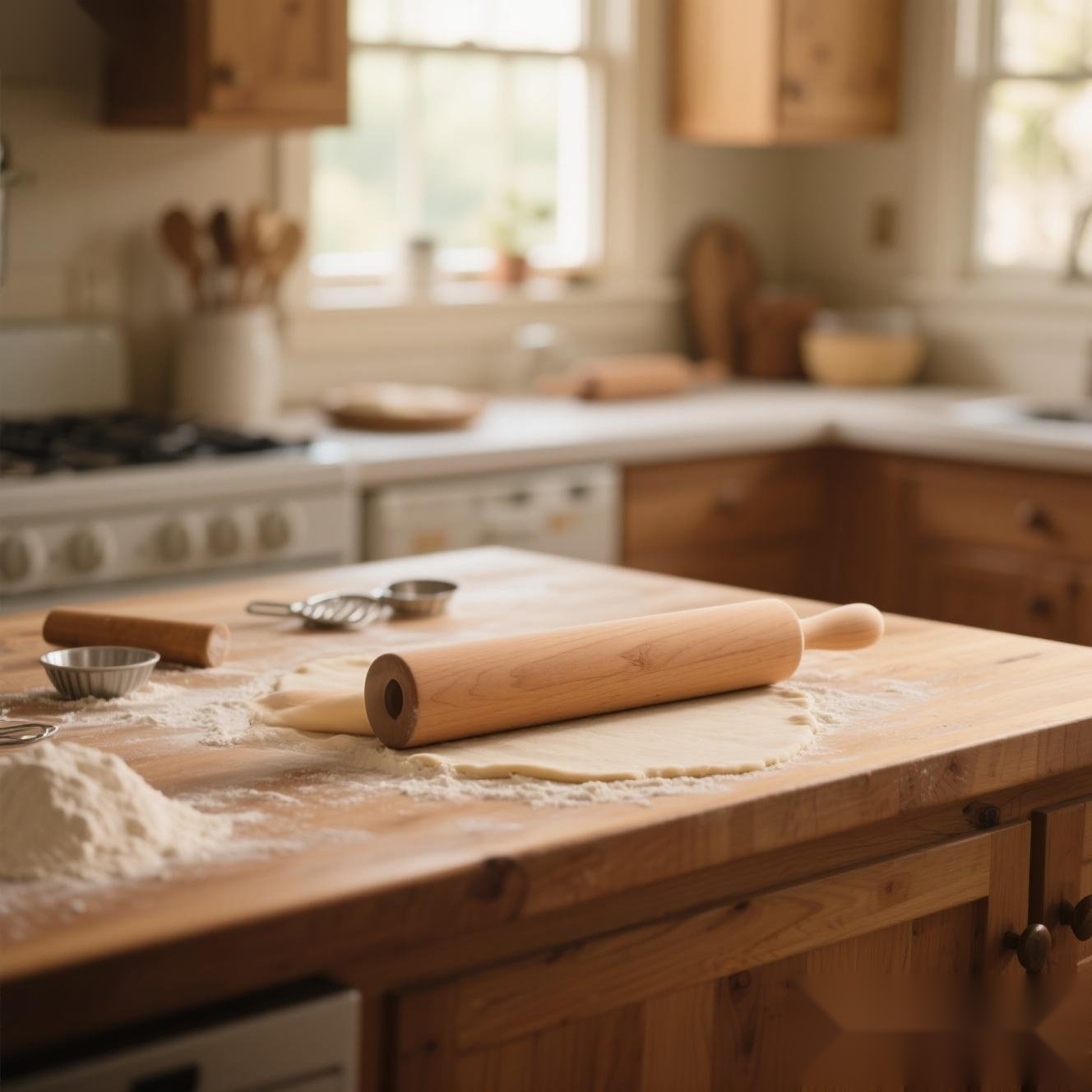}{FREAK-1683}{rolling-pin handles?}{1}{2/2/2}{1} &
\freakcase{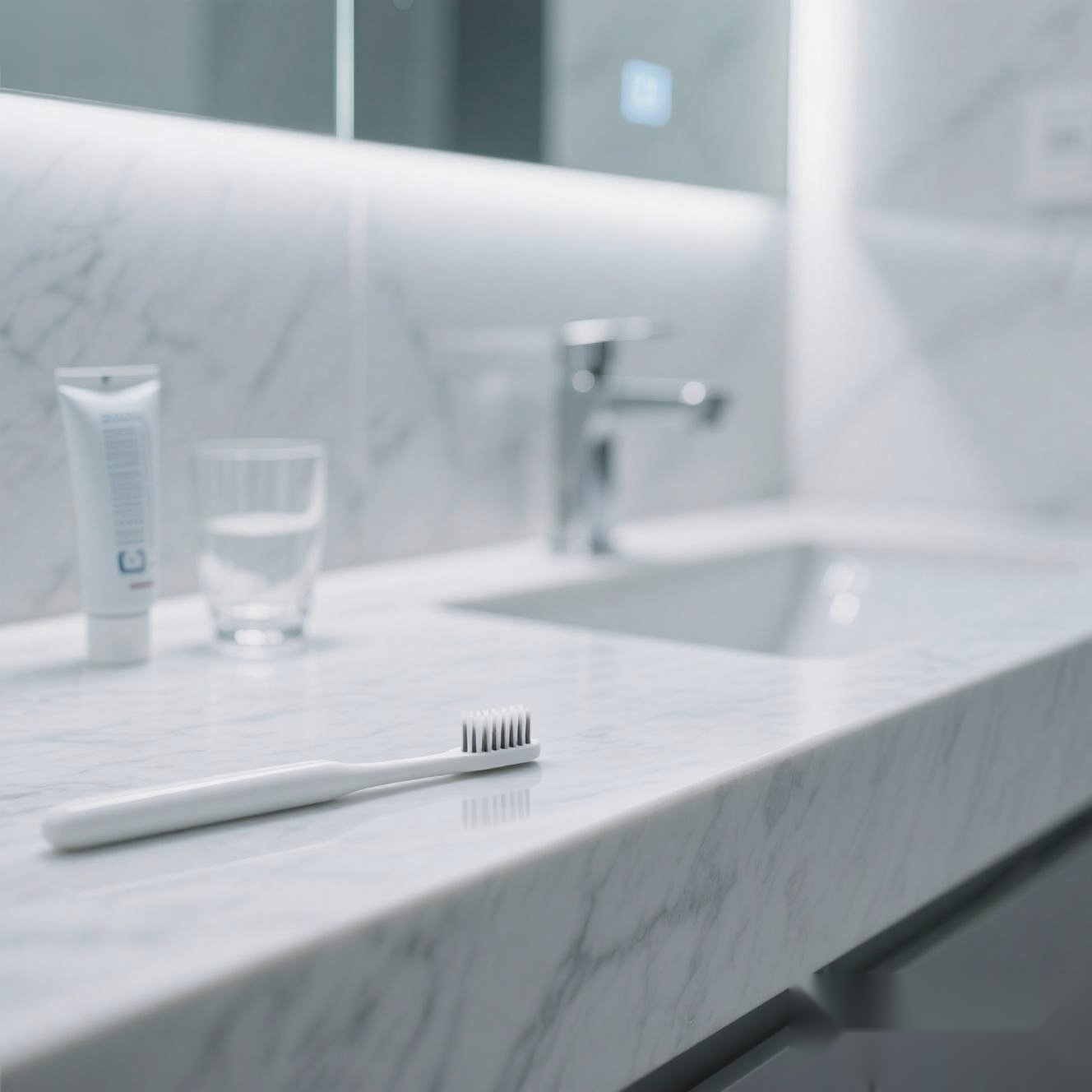}{FREAK-1694}{toothbrush rows?}{1}{3/2/3}{1} &
\freakcase{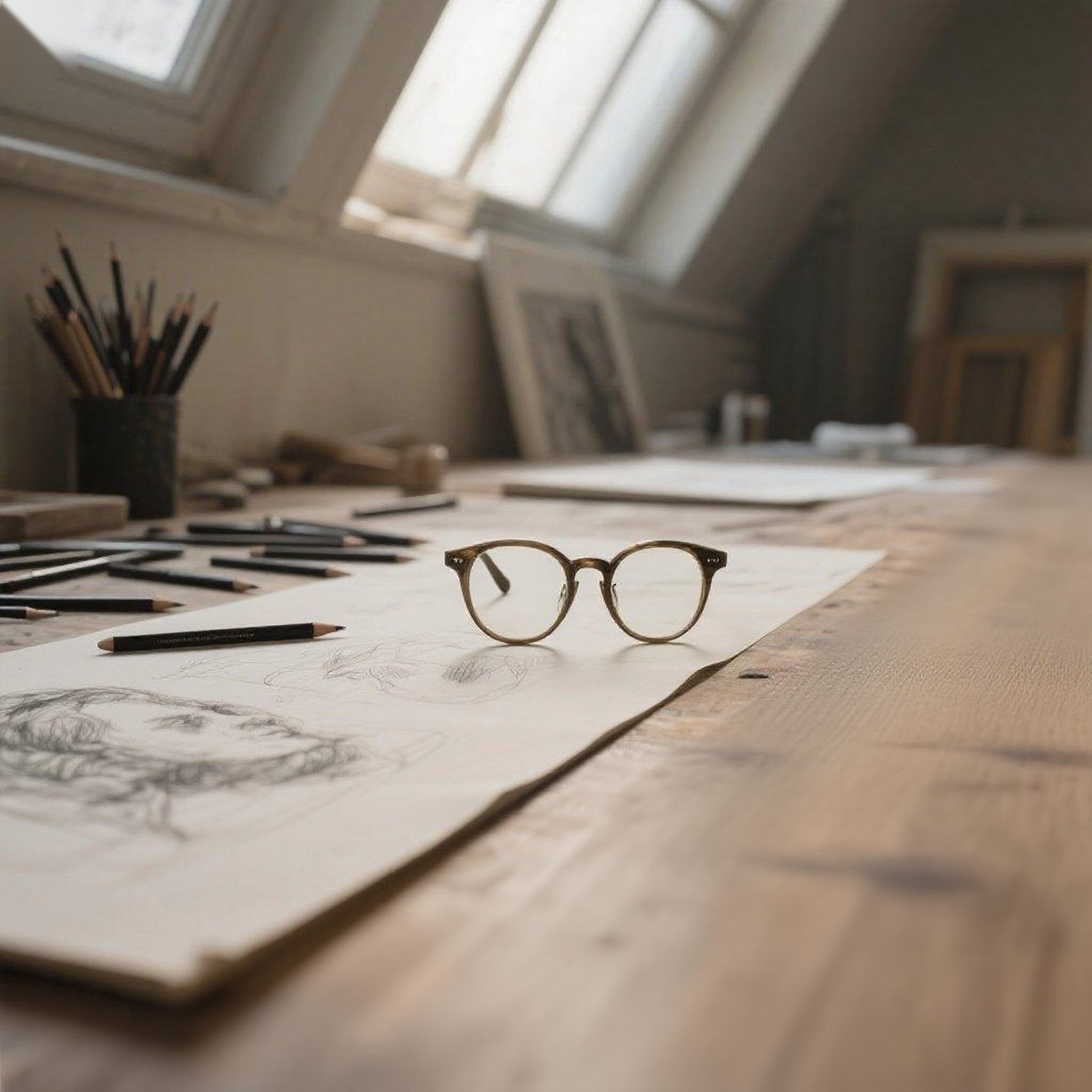}{FREAK-1713}{eyeglass arms?}{1}{2/2/2}{1} &
\freakcase{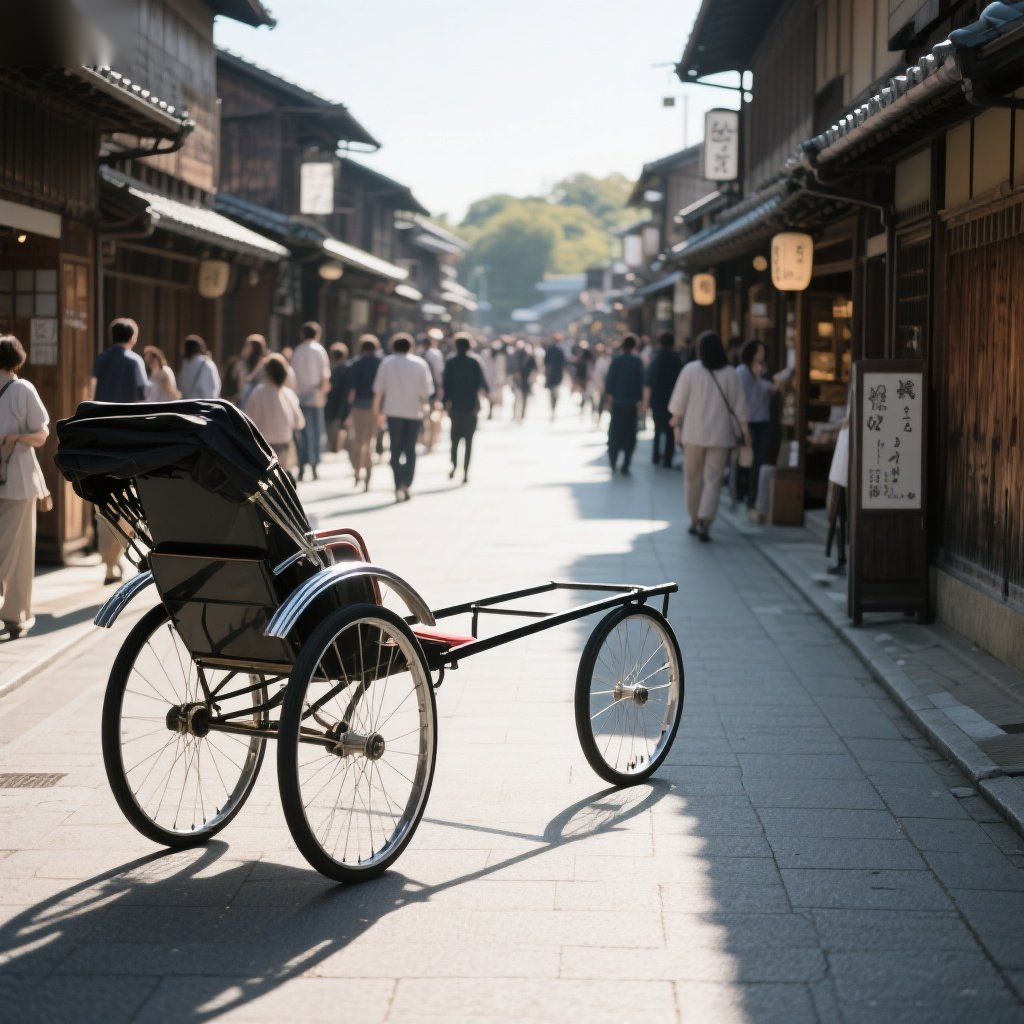}{FREAK-1751}{rickshaw tires?}{3}{2/2/2}{3}
\end{tabular}
\caption{Five prior-conflict FREAK cases. Each card reports the ground truth, the outputs of PAPO, LLaVA-v1.6-7B, and the Qwen2.5-VL-7B base model in that order, and the output of ours. The non-CED models follow a plausible object prior, whereas ours follows the local visual evidence.}
\label{fig:qualitative-prior-cases}
\end{figure}

\paragraph{Behavior in the absence of visual evidence.}
Blank-image ScienceQA items provide the complementary test. In examples whose image is only a blank $448{\times}448$ placeholder, such as ``What is the mass of a cement truck?'' with options 20 tons / 20 pounds / 20 ounces, the frozen base selects the prior-consistent answer and is scored correct, while \ced{} responds ``Cannot determine from image'' and is scored incorrect. This is precisely the downstream tasks' behavior targeted by the reward: when no answer is grounded in image evidence, the model should avoid substituting language priors. This also explains part of the modest ScienceQA gains in \Cref{tab:cross-backbone-ced}: \ced{} improves the visual-reasoning portion as in other benchmarks, while the blank-image subset rewards the prior-following behavior that \ced{} suppresses. The net effect remains positive on every backbone.

\paragraph{No text-only degradation.}
A potential concern is that rewarding evidence dependence may harm general text reasoning. We evaluate the Qwen3.5-9B Answer-CED checkpoint on eight text-only benchmarks (ARC-Easy, ARC-Challenge, OpenBookQA, CommonsenseQA, WinoGrande, PIQA, BoolQ, TruthfulQA-MC2) spanning commonsense, science, and factual reasoning. The mean accuracy change is $-0.27$~pp, with no single benchmark dropping more than $1.5$~pp. \ced{} improves visual grounding without degrading the language backbone.

\paragraph{Evidence-closed text-only self-evolution.}
The following analysis makes this intuition precise: an image-conditional signal is structurally necessary to distinguish grounded from shortcut trajectories. In VLM RL, model-judged or self-evolving audits often collapse back to text because the cheap verifiable targets used in LLM self-evolution, such as executable code or checkable arithmetic, have no per-sample-cheap visual analogue. Let $S_t$ be the reward, $I_t$ the image, $\hat Y_t$ the answer, and $G_t \in \{0,1\}$ the latent grounding indicator. This conditional independence is a direct consequence of the path structure in Figure 1: the evidence path and the shortcut path both terminate at the answer node, so a reward that observes only the answer cannot determine which upstream path produced it.A text-only reward satisfies
$S_t \perp\!\!\!\perp I_t \mid (\hat Y_t, Q_t)$,
and therefore
$I(S_t;\, G_t \mid \hat Y_t,\, Q_t) = 0$:
two trajectories that reach the same correct answer through grounding versus a language-prior shortcut induce identical reward distributions. We call this regime \emph{evidence-closed self-evolution}. As long as the audit observes only text, judge capacity, annealing, or trustworthiness reweighting cannot resolve the ambiguity, since the limitation comes from the conditional independence structure. \ced{} breaks this closure by injecting an image-conditional signal. The evidence margin $m(I,q,y)$ in \Cref{eq:margin} cannot be computed from $(\hat Y_t, Q_t)$ alone, and the strict inequality
$H(G_t \mid \mathcal{B}_t, A_t) < H(G_t \mid \mathcal{B}_t)$
formalizes the empirical fact that \ced{} reorders trajectories that correctness rewards cannot. The formal argument is developed in \Cref{app:information-theory}. This yields a unified diagnosis of recent self-evolving multimodal RL pipelines: stable shortcut correction depends less on judge capacity than on whether the reward can observe the visual evidence path.

\section{Conclusion}
\label{sec:conclusion}

We introduced \sysname{}, a post-training framework for VLMs built around \ced{}, a counterfactual evidence diagnostic motivated by a causal view of visual grounding. \ced{} moves beyond rewarding answer correctness alone by asking whether a sampled response depends on the Evidence Region causally. With spatial-region interventions, this diagnostic can be directly integrated into GRPO as an evidence-aware reward, producing grounded reasoning behavior with no additional inference cost. Across four backbones and nine benchmarks, \sysname{} consistently improves performance, with the largest gains on benchmarks where shortcut behavior is most exposed.

Post-training should reward correct answers for depending on the visual evidence relevant to the question, rather than for matching the answer distribution alone. Though our experiments use object-centric interventions, gains generalize to various downstream tasks: the \ced{} signal supervises the reasoning process and instills faithful evidence dependence, the capability most absent from current VLMs. Achieving high accuracy without depending on the relevant evidence is not visual reasoning; making that dependence a prioritized training objective is what evidence-intensive reasoning requires.

\bibliographystyle{plainnat}
\bibliography{refs}

\clearpage
\appendix
\setcounter{figure}{0}
\setcounter{table}{0}
\renewcommand{\thefigure}{A.\arabic{figure}}
\renewcommand{\thetable}{A.\arabic{table}}
\renewcommand{\theHfigure}{appendix.figure.\arabic{figure}}
\renewcommand{\theHtable}{appendix.table.\arabic{table}}
\providecommand{\FloatBarrier}{}

\section{Supplementary Experiments and Implementation Details}

\subsection{Reproducibility Details}
\label{app:reproducibility}

\paragraph{Dataset split and task-family distribution.}
The training data are derived from COCO val2017~\citep{lin2014microsoft} after quality filtering and format normalization.
This yields 21,758 raw samples, split into train (17,502 valid), validation (2,352 valid), and probe (1,904 valid) subsets.
After RL-specific filtering, 15,314 samples are loaded for online training.
The loaded training set contains three task families: attribute (6,562), counting (2,188), and spatial (6,564).
Training images are drawn from COCO val2017~\citep{lin2014microsoft}. CountBench~\citep{paiss2023teaching}, SpatialEval~\citep{wang2024picture}, and MathVista~\citep{lu2023mathvista} use independently sourced evaluation images and share no image-level overlap with the training set; the remaining benchmarks are cited in \Cref{sec:experiments} and likewise contain no COCO val2017 images.

\paragraph{Training configuration.}
Training uses \texttt{balanced\_no\_replacement} sampling for uniform coverage across task families, GRPO group size $32$, and $2{,}000$ optimization steps.
The trainable part of the model is restricted to the last four layers.
Full training and reward hyperparameters are listed in \Cref{tab:hyperparams}.

\begin{table}[t]
\centering
\small
\setlength{\tabcolsep}{5pt}
\caption{Complete hyperparameter listing. ``Paper notation'' gives the corresponding symbol in the main text when applicable.}
\label{tab:hyperparams}
\resizebox{\linewidth}{!}{%
\begin{tabular}{@{}llcl@{}}
\toprule
\textbf{Category} & \textbf{Parameter} & \textbf{Value} & \textbf{Paper notation} \\
\midrule
\multicolumn{4}{l}{\emph{RL \& optimization}} \\
Learning rate              & \texttt{lr}                    & $1 \times 10^{-5}$ & --- \\
KL penalty coefficient     & \texttt{kl\_coeff}             & 0.01   & --- \\
GRPO group size            & \texttt{group\_size}           & 32     & --- \\
Training steps             & \texttt{n\_steps}              & 2{,}000 & --- \\
Trainable layers           & \texttt{n\_trainable\_layers}  & 4 (last) & --- \\
Precision                  & \texttt{dtype}                 & bfloat16 & --- \\
\midrule
\multicolumn{4}{l}{\emph{Generation \& sampling}} \\
Temperature                & \texttt{temperature}           & 1.0    & --- \\
Top-$p$                    & \texttt{top\_p}                & 0.95   & --- \\
Max new tokens (evidence)  & \texttt{max\_new\_tokens}      & 32     & --- \\
Max response tokens (full) & \texttt{max\_response\_tokens} & 64     & --- \\
Prompt format              & \texttt{short\_evidence\_v1}   & ---    & --- \\
\midrule
\multicolumn{4}{l}{\emph{Reward function (\Cref{eq:reward})}} \\
Response reward weight     & \texttt{alpha\_resp}           & 0.70   & see note$^\dagger$ \\
Answer reward weight       & \texttt{alpha\_ans}            & 0.30   & see note$^\dagger$ \\
Response logprob temp.     & \texttt{tau\_resp}             & 0.20   & --- \\
Answer logprob temp.       & \texttt{tau\_ans}              & 1.00   & --- \\
Gate temperature           & \texttt{tau\_resp} (shared)    & 0.20   & $\tau_g$ \\
Gate baseline at $m{=}0$   & (derived)                     & 0.50   & $g(0)$ \\
Evidence tie-breaker       & \texttt{evidence\_eps}         & 0.10   & $\varepsilon_{\text{tie}}$ \\
Additive evidence weight   & \texttt{evidence\_eps} (shared)& 0.10  & $\lambda$ \\
Reward clipping            & \texttt{[min, max]}            & $[-1.25,\; 1.00]$ & --- \\
Format penalty             & ---                            & $-0.75$ & --- \\
\midrule
\multicolumn{4}{l}{\emph{Data \& evaluation}} \\
Rerank candidates          & \texttt{rerank\_num\_candidates} & 6    & --- \\
non-evidence Regions per sample & $K$                          & 3      & $K$ \\
\bottomrule
\end{tabular}%
}
\vspace{2pt}

\raggedright\footnotesize
$^\dagger$ In \Cref{eq:reward}, $R_{\text{ans}}$ is a composite:
$R_{\text{ans}} = \texttt{alpha\_ans} \cdot \text{answer\_score} + \texttt{alpha\_resp} \cdot \text{response\_score}$,
where each score is computed from log probabilities scaled by its temperature.
\end{table}

\paragraph{Broader impacts.}
\ced{} is a training-time diagnostic and reward component for improving visual grounding in VLMs.
It does not directly interact with end users or generate user-facing content.
Improved grounding may reduce shortcut-driven errors in safety-relevant visual reasoning tasks.
The main risk is that counterfactual probing can also reveal model vulnerabilities, although \ced{} exposes only a scalar evidence margin rather than a detailed attack surface.

\subsection{Intervention Implementation Details}
\label{app:intervention-details}

\paragraph{Replacement strategies.}
We evaluate three replacement strategies: \emph{mean replacement} (default), \emph{zero replacement}, and \emph{Gaussian-noise replacement}.
All interventions are applied to the last-layer vision-encoder output after spatial merging.
Mean replacement substitutes Evidence Region visual tokens with the mean of neighboring visual tokens outside $\mathcal{T}(\Omega)$; zero replacement sets selected tokens to $\mathbf{0}$; Gaussian-noise replacement substitutes zero-mean Gaussian samples.

\paragraph{Spatial-merge-aware token mapping.}
Modern vision encoders apply a $2{\times}2$ spatial merge before feeding visual tokens to the LLM.
Given merge dimensions $(s_w,s_h)=(2,2)$, an original grid coordinate $(g_x,g_y)$ maps to the merged-token index
\[
(g_y \mathbin{/\mkern-4mu/} s_h) \times w_{\mathrm{merged}} + (g_x \mathbin{/\mkern-4mu/} s_w).
\]
For non-evidence Region construction, the mapping module supports Chebyshev-distance ring expansion on the merged feature map.

\paragraph{Proposal construction pipeline.}
Positive proposals are sourced from COCO instance annotations~\citep{lin2014microsoft}.
They are object-level bounding boxes, not question-specific region annotations.
The Evidence Region $\Omega^{\mathrm{ev}}$ is resolved from sample metadata using the priority
\texttt{proposal\_bbox} $>$ \texttt{target\_bbox} $>$ \texttt{argument\_bbox} $>$ \texttt{metadata.bbox} $>$ center-box fallback.
non-evidence Regions $\{\Omega^{\mathrm{non}}_k\}$ are sampled as area-matched random boxes with IoU $\le 0.05$ to $\Omega^{\mathrm{ev}}$.
If box sampling fails, random visual-token subsets of matching size are used as fallback.
Quality diagnostics filter proposals with extreme area fractions or insufficient token coverage before training.

\subsection{Comparison with Perception-Aware RL Methods}
\label{app:method-comparison}

\Cref{tab:method-comparison} compares \ced{} with representative perception-aware VLM RL methods.
The key distinction is whether the training signal tests candidate-answer dependence on local visual evidence, rather than coarse image dependence or text-only consistency.

\begin{table}[t]
\centering
\small
\setlength{\tabcolsep}{3pt}
\caption{Methodological comparison with perception-aware VLM RL methods. The \emph{Causal audit} column indicates whether the method tests causal evidence dependence for a candidate answer.}
\label{tab:method-comparison}
\resizebox{\linewidth}{!}{%
\begin{tabular}{@{}lcccc@{}}
\toprule
\textbf{Method} & \textbf{Signal source} & \textbf{Granularity} & \textbf{Causal audit} & \textbf{Stage} \\
\midrule
PAPO~\citep{wang2025papo} & Global mask KL & Global & No & Training \\
VPPO~\citep{huang2025vppo} & Attention weights & Token-level & No & Training \\
Vision-SR1\textsuperscript{\textdagger} & Text self-verification & Description-level & No & Training \\
PaLMR\textsuperscript{\textdaggerdbl} & LLM-as-Judge & Trajectory-level & No & Training \\
SophiaVL-R1~\citep{fan2025sophiavl} & Thinking-reward RM & Trajectory-level & No & Training \\
PEARL\textsuperscript{\S} & Perception checklist & Sample-level & No & Training \\
VAPO\textsuperscript{\P} & Trajectory anchoring & Token-level & No & Training \\
VLM-R1\textsuperscript{*} & Self-verification & Response-level & No & Training \\
Perception-R1\textsuperscript{\textdagger\textdagger} & Proxy localization & Region-level$^a$ & No & Training \\
\midrule
\ced{} (Ours) & Counterfactual intervention & \textbf{Region-level} & \textbf{Yes} & Training \\
\bottomrule
\end{tabular}%
}
\begin{flushleft}
\footnotesize
\textsuperscript{\textdagger}\citet{li2025self}; \textsuperscript{\textdaggerdbl}\citet{li2026palmr}; \textsuperscript{\S}\citet{zhang2025perceptual}; \textsuperscript{\P}\citet{yue2025vapo}; \textsuperscript{*}\citet{shen2025vlm}; \textsuperscript{\textdagger\textdagger}\citet{xiao2025perceptionr1}.\\
$^a$Perception-R1 reaches region-level granularity through proxy localization and requires additional supervision.
\end{flushleft}
\end{table}

\subsection{Additional Evidence-Signal Diagnostics}
\label{app:signal-extended}
\label{app:sanity-checks}

\paragraph{Relevant-vs-random intervention.}
On a held-out set of 500 samples per task family, we compare Evidence Region sensitivity $s(\Omega^{\mathrm{ev}})$ with mean non-evidence Region sensitivity $\mu(s^{\mathrm{non}})$.
On counting tasks, the Evidence Region elicits higher sensitivity in 63.4\% of samples, with mean Evidence Region $s=2.566$ versus mean non-evidence Region $s=0.638$.
On presence tasks, the rate drops to 55.0\%, with mean Evidence Region $s=1.082$ versus mean non-evidence Region $s=0.302$.
This supports the regime separation used by the routing rule in \Cref{sec:ced-in-grpo}.

\paragraph{Cross-model validation.}
We also validate the signal on InternVL3.5-8B~\citep{wang2025internvl3} and LLaVA-v1.6-Mistral-7B~\citep{li2024llava} using pixel-level intervention.
\Cref{tab:cross-model} shows that counting tasks maintain higher relevant-vs-random rates across both models, while presence tasks remain weaker.

\begin{table}[t]
\centering
\small
\setlength{\tabcolsep}{6pt}
\caption{Cross-model evidence sensitivity validation ($N{=}50$ per task per model).}
\label{tab:cross-model}
\begin{tabular*}{\linewidth}{@{\extracolsep{\fill}}llcc@{}}
\toprule
\textbf{Model} & \textbf{Task} & \textbf{Relevant $>$ Random rate} & \textbf{Mean margin} \\
\midrule
\multirow{2}{*}{InternVL3.5-8B} & Counting  & 0.68 & 0.0070 \\
                                      & Presence  & 0.62 & 0.0737 \\
\midrule
\multirow{2}{*}{LLaVA-v1.6-7B}   & Counting  & 0.78 & 0.0074 \\
                                      & Presence  & 0.60 & 0.1625 \\
\bottomrule
\end{tabular*}
\end{table}

\paragraph{Blindfold test.}
After replacing the image with a blank input, the relevant-vs-random rate drops to 0.404 on counting and 0.448 on presence, both below the 0.50 chance level.
Mean delta collapses from $+1.928/+0.780$ to $-0.114/+0.070$, confirming that the signal originates from real image content.

\begin{table}[t]
\centering
\small
\setlength{\tabcolsep}{6pt}
\caption{Blindfold test results ($N{=}500$ per task family).}
\label{tab:blindfold}
\begin{tabular*}{\linewidth}{@{\extracolsep{\fill}}llcc@{}}
\toprule
\textbf{Condition} & \textbf{Task} & \textbf{Rel. $>$ Rand. rate} & \textbf{Mean Delta} \\
\midrule
Normal    & Counting  & 0.634 & $+1.928$ \\
Normal    & Presence  & 0.550 & $+0.780$ \\
\midrule
Blindfold & Counting  & 0.404 & $-0.114$ \\
Blindfold & Presence  & 0.448 & $+0.070$ \\
\bottomrule
\end{tabular*}
\end{table}

\subsection{Intervention-Type Ablations}
\label{app:perturbation-ablation}
\label{app:why-mean}
\label{app:intervention-artifact}
\label{app:baseline-proxy}

\paragraph{Why mean replacement reduces intervention artifacts.}
The perturbation ablation shows that mean replacement gives the strongest discrimination within the local intervention family: AUC $0.669$ for mean replacement, compared with $0.641$ for zero replacement and $0.629$ for Gaussian-noise replacement.
This ordering follows from the role of the replacement as a structural baseline.
Write the Evidence Region token representation as
\begin{equation}
\label{eq:decomposition}
    h_T = b_T + e_T,
\end{equation}
where $b_T$ is a context-consistent baseline component and $e_T$ is the answer-relevant evidence component.
The ideal counterfactual removes $e_T$ while preserving $b_T$.
Replacement strategies differ in how much they perturb this baseline:
\begin{align}
\label{eq:three-replacements}
    h_T^{\mathrm{mean}}  &= \mu_T = b_T + \delta_T, &
    h_T^{\mathrm{zero}}  &= \mathbf{0}, &
    h_T^{\mathrm{noise}} &= b_T + \xi_T .
\end{align}
Let $F$ be the downstream logit assigned to the candidate answer and $J_T$ its local Jacobian with respect to $h_T$.
A first-order expansion gives
\begin{align}
\label{eq:three-delta}
    \Delta F^{\mathrm{mean}}  &\approx J_T(\delta_T - e_T), &
    \Delta F^{\mathrm{zero}}  &\approx -J_T(b_T + e_T), &
    \Delta F^{\mathrm{noise}} &\approx J_T(\xi_T - e_T).
\end{align}
All three contain the desired signal term $-J_T e_T$.
Mean replacement adds only the neighborhood-offset artifact $J_T\delta_T$, zero replacement adds the systematic baseline-removal artifact $-J_Tb_T$, and Gaussian replacement adds sample-specific noise.
Since context mixing makes the neighborhood offset smaller than the full baseline norm at the intervention layer, mean replacement preserves the non-evidence part of the representation more faithfully.

\paragraph{AUC ablation and global masking.}
\Cref{tab:gaussian-noise-ablation} reports the local perturbation-family ablation.
Global masking performs worse: it does not support stable answer-conditioned discrimination, with above-baseline rate $0.490$ below chance.
Local counterfactual intervention retains discriminative utility with AUC $0.620$.

\begin{table}[t]
\centering
\small
\setlength{\tabcolsep}{7pt}
\caption{Perturbation-family ablation within the local intervention family, evaluated by logits-JS AUC.}
\label{tab:gaussian-noise-ablation}
\begin{tabular*}{\linewidth}{@{\extracolsep{\fill}}lccc@{}}
\toprule
\textbf{Key mode} & \textbf{Mean repl.} & \textbf{Zero repl.} & \textbf{Gaussian-noise repl.} \\
\midrule
prompt\_last  & \textbf{0.6780} & 0.6602 & 0.6299 \\
answer\_first & \textbf{0.6592} & 0.6215 & 0.6273 \\
\midrule
Average       & \textbf{0.6686} & 0.6409 & 0.6286 \\
\bottomrule
\end{tabular*}
\end{table}

\begin{table}[t]
\centering
\small
\setlength{\tabcolsep}{6pt}
\caption{Gaussian-noise profile under \texttt{prompt\_last}. Stability drops under \texttt{logits\_only}.}
\label{tab:gaussian-noise-profile}
\begin{tabular*}{\linewidth}{@{\extracolsep{\fill}}lccc@{}}
\toprule
\textbf{Config} & \textbf{Fixed metric} & \textbf{Best metric} & \textbf{Best paired AUC} \\
\midrule
logits24     & 0.6299 & 0.6299 & 0.7108 \\
logits\_only & 0.5753 & 0.5920 & 0.6407 \\
\bottomrule
\end{tabular*}
\end{table}

\subsection{Reward Hyperparameter Sensitivity}
\label{app:hyperparameter-sweep}

\Cref{tab:lambda-sweep,tab:tau-g-sweep} show that the default reward hyperparameters lie in stable regions.
Across $\lambda\in[0.05,0.20]$, the non-constant reward rate remains 96--100\%.
For $\tau_g$, values around 0.15--0.30 yield balanced selection behavior; we use $\tau_g=0.20$ by default.

\begin{table}[t]
\centering
\small
\setlength{\tabcolsep}{5pt}
\caption{Evidence weight ($\lambda$) sensitivity. $\lambda=0.00$ is correctness-only.}
\label{tab:lambda-sweep}
\begin{tabular*}{\linewidth}{@{\extracolsep{\fill}}lccc@{}}
\toprule
$\lambda$ & Non-const. rate & Reward std & Zero-var. rate \\
\midrule
0.00 & 100\% & 0.0528 & 0\% \\
0.05 & 100\% & 0.0508 & 0\% \\
\textbf{0.10} & \textbf{96\%} & \textbf{0.0597} & \textbf{4\%} \\
0.15 & 100\% & 0.0415 & 4\% \\
0.20 & 100\% & 0.0491 & 0\% \\
\bottomrule
\end{tabular*}
\end{table}

\begin{table}[t]
\centering
\small
\setlength{\tabcolsep}{5pt}
\caption{Gate temperature ($\tau_g$) sensitivity.}
\label{tab:tau-g-sweep}
\begin{tabular*}{\linewidth}{@{\extracolsep{\fill}}lccc@{}}
\toprule
$\tau_g$ & Non-const. rate & Reward std & Zero-var. rate \\
\midrule
0.10 & 100\% & 0.0568 & 4\% \\
\textbf{0.20} & \textbf{96\%} & \textbf{0.0597} & \textbf{4\%} \\
0.30 & 96\% & 0.0469 & 4\% \\
0.50 & 100\% & 0.0378 & 0\% \\
\bottomrule
\end{tabular*}
\end{table}

\subsection{Proposal Robustness to Spatial Perturbations}
\label{app:proposal-robustness}

This experiment tests whether the \ced{} probe remains useful when the target proposal is imperfect.
It is orthogonal to \Cref{app:perturbation-ablation}: that subsection varies the intervention type while holding the region fixed; here we hold the intervention type fixed as mean replacement and vary the region.

\paragraph{Protocol.}
Starting from the true Evidence Region box of each sample, we generate perturbed proposals and re-run the full \ced{} pipeline with $K{=}3$ non-evidence Regions.
We test three perturbation axes: IoU stratification with the original box, scale changes around the box center, and translations by a fixed fraction of the image short side.
A \texttt{random} region baseline gives the expected floor without spatial targeting.
We report mean evidence margin $\bar m$, evidence-non-evidence gap $\Delta=\mu(s^{\mathrm{ev}})-\mu(s^{\mathrm{non}})$, and evidence-beats-random rate $\Pr(s^{\mathrm{ev}}>s^{\mathrm{rand}})$.
All numbers use the Qwen3.5-9B CoT-CED checkpoint on $n{=}41$ count-exclusion samples.

\begin{table}[t]
\centering
\small
\setlength{\tabcolsep}{6pt}
\caption{Proposal-perturbation results on count-exclusion ($n{=}41$). Higher values indicate stronger evidence dependence; the \texttt{random} row gives the floor without spatial targeting.}
\label{tab:proposal-robustness}
\begin{tabular*}{\linewidth}{@{\extracolsep{\fill}}llccc@{}}
\toprule
\textbf{Axis} & \textbf{Condition} & $\bar m$ & $\Delta$ & \textbf{Rel. rate} \\
\midrule
\multirow{4}{*}{IoU stratification}
  & $[0.7,\,1.0]$ & \textbf{0.501} & \textbf{2.032} & \textbf{0.732} \\
  & $[0.5,\,0.7)$ & 0.379 & 1.282 & 0.683 \\
  & $[0.3,\,0.5)$ & 0.341 & 1.386 & 0.683 \\
  & $[0.0,\,0.3)$ & 0.219 & 0.555 & 0.561 \\
\midrule
\multirow{5}{*}{Scale}
  & $0.50\times$ & 0.377 & 1.113 & 0.732 \\
  & $0.75\times$ & 0.481 & 1.750 & 0.732 \\
  & $1.25\times$ & 0.433 & 2.230 & 0.683 \\
  & $1.50\times$ & 0.295 & 2.003 & 0.634 \\
  & $2.00\times$ & 0.382 & 1.880 & 0.707 \\
\midrule
\multirow{3}{*}{Shift}
  & $5\%$  & 0.304 & 1.602 & 0.634 \\
  & $10\%$ & 0.428 & 1.457 & 0.707 \\
  & $20\%$ & 0.532 & 1.899 & 0.732 \\
\midrule
\multicolumn{2}{l}{\textbf{original proposal} (reference)} & 0.386 & 1.852 & 0.683 \\
\multicolumn{2}{l}{\texttt{random} region (floor)}         & 0.147 & 0.277 & 0.537 \\
\bottomrule
\end{tabular*}
\end{table}

\begin{figure}[t]
    \centering
    \includegraphics[width=\linewidth]{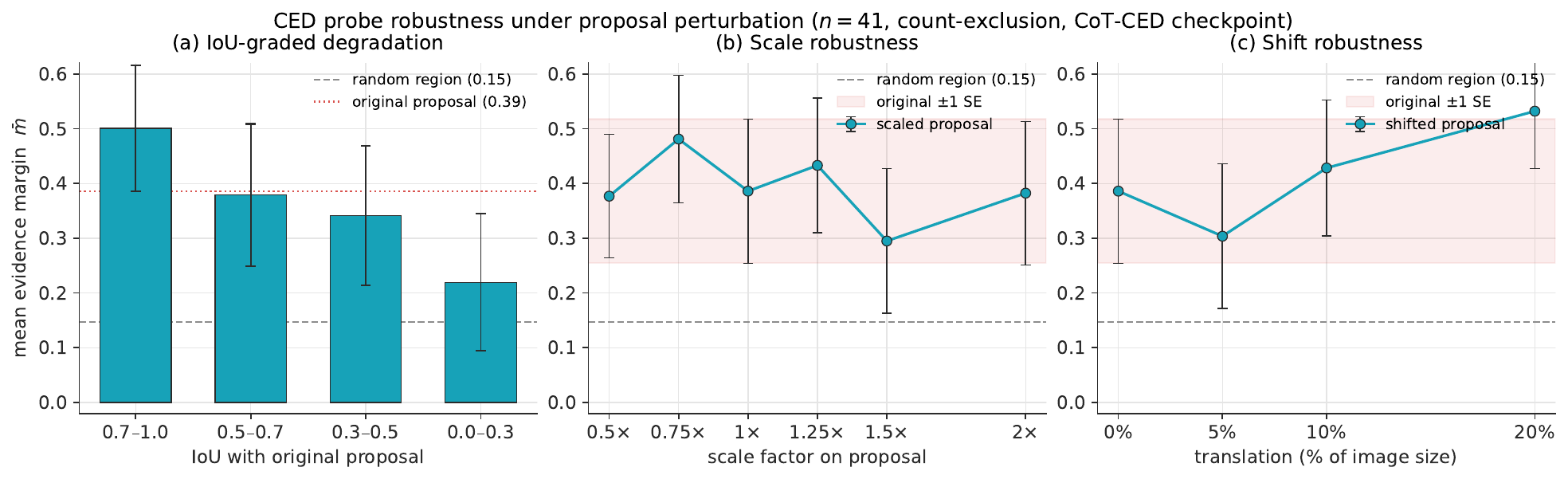}
    \caption{\ced{} probe robustness under proposal perturbation on the count-exclusion diagnostic. (a) Mean evidence margin $\bar m$ across IoU bins, with the random-region baseline as a grey dashed reference. (b,c) $\bar m$ under scale and translation perturbations, with the original $\pm1$ SE band in red. Error bars are $\pm1$ SE.}
    \label{fig:proposal-robustness}
\end{figure}

\paragraph{Interpreting the results.}
IoU gives the cleanest trend: $\bar m$ and $\Delta$ both decrease as overlap with the original proposal drops.
The lowest IoU bin approaches the random floor, indicating that \ced{} does not merely reward masking magnitude.
Scale and shift perturbations do not show systematic degradation within the tested ranges.
Their evidence-reference gaps stay well above the random baseline, suggesting that moderate detector imprecision is a second-order effect for this diagnostic.
All measurements here are probe diagnostics rather than downstream accuracy estimates.

\subsection{Shared-Candidate Baselines and Reward Diagnostics}
\label{app:baseline-comparison}

\paragraph{Unified-condition protocol.}
Recent perception-aware methods differ in backbone and data.
We therefore separate checkpoint transfer from signal design by evaluating lightweight method variants under a shared-candidate protocol with the same backbone, candidate pool, and benchmark manifest.

\begin{table}[t]
\centering
\small
\setlength{\tabcolsep}{6pt}
\caption{Unified-condition baseline comparison on the offline rerank dataset ($N{=}1{,}353$).}
\label{tab:baseline-comparison}
\begin{tabular*}{\linewidth}{@{\extracolsep{\fill}}lccc@{}}
\toprule
\textbf{Method} & \textbf{Reward mean} & \textbf{Signal strength} & \textbf{Signal-corr} \\
\midrule
Vanilla GRPO & $-0.149$ & N/A & N/A \\
PAPO-Lite & $-0.156$ & KL $=0.024$ & $0.034$ \\
\ced{} (Ours) & $+0.448$ & Margin $=-0.007$ & Selective \\
\bottomrule
\end{tabular*}
\end{table}

\paragraph{Shared-candidate comparison.}
\Cref{tab:extended-baseline-comparison} reports Lite versions of representative methods on the counting shared-candidate slice.
The key readout is not final accuracy alone, which is saturated on this slice, but whether the reward can produce non-constant within-group signal and flip candidate preference beyond log-probability ranking.

\begin{table}[t]
\centering
\small
\setlength{\tabcolsep}{4pt}
\caption{Unified-condition method comparison on the counting shared-candidate slice: 94 samples with at least two distinct candidates.}
\label{tab:extended-baseline-comparison}
\begin{tabular*}{\linewidth}{@{\extracolsep{\fill}}lcccc@{}}
\toprule
\textbf{Method} & \textbf{Acc.} & \textbf{Strong inf.} & $r_{\mathrm{nc}}\uparrow$ & flip $\uparrow$ \\
\midrule
LogProb & 94.7 & 75.0 & -- & -- \\
Correctness-only & 97.9 & 100.0 & 12.8 & 4.3 \\
\textbf{\ced{} (Ours)} & \textbf{97.9} & \textbf{100.0} & \textbf{100.0} & \textbf{76.6} \\
Perception-R1-Lite & 97.9 & 100.0 & 13.8 & 5.3 \\
VLM-R1-Lite$^*$ & 94.7 & 75.0 & 0.0 & 0.0 \\
VPPO-Lite$^*$ & 94.7 & 75.0 & 0.0 & 0.0 \\
PAPO-Lite$^*$ & 94.7 & 75.0 & 0.0 & 0.0 \\
\midrule
\multicolumn{5}{l}{\footnotesize $^*$ No reward-based reranking recovered; falls back to log probability.} \\
\bottomrule
\end{tabular*}
\end{table}

\paragraph{Within-group variance telemetry.}
Correctness-only has within-group zero-variance rate 0.949, non-constant rate 0.051, and same-answer-different-reward discrimination 0.0.
Additive and ours reduce the zero-variance rate to 0.035 and restore the non-constant rate to 0.965.
Ours gives slightly higher discrimination, 0.555 versus 0.535.
Re-scoring the pre-RL checkpoint gives a yes/no shortcut bias gap of 0.077 for Correctness-only, 0.075 for Additive, and 0.015 for ours.
The intervention area-reward Pearson correlation is 0.080, ruling out area-driven reward hacking.
Reward-length correlations are consistent across methods: $-0.226$, $-0.222$, and $-0.230$.

\subsection{Reward Discrimination and Failure Modes}
\label{app:same-answer-different-reward}
\label{app:zero-variance-figure}
\label{app:existence-failure}
\label{app:failure-cases}

The same-answer-different-reward statistic in \Cref{tab:signal-validation} is visualized in \Cref{fig:same-answer-diff-reward}.
Rollouts with the same final answer receive different rewards once the evidence margin enters the signal, reflecting differences in the visual evidence each rollout uses.

\begin{figure}[t]
    \centering
    \includegraphics[width=\linewidth]{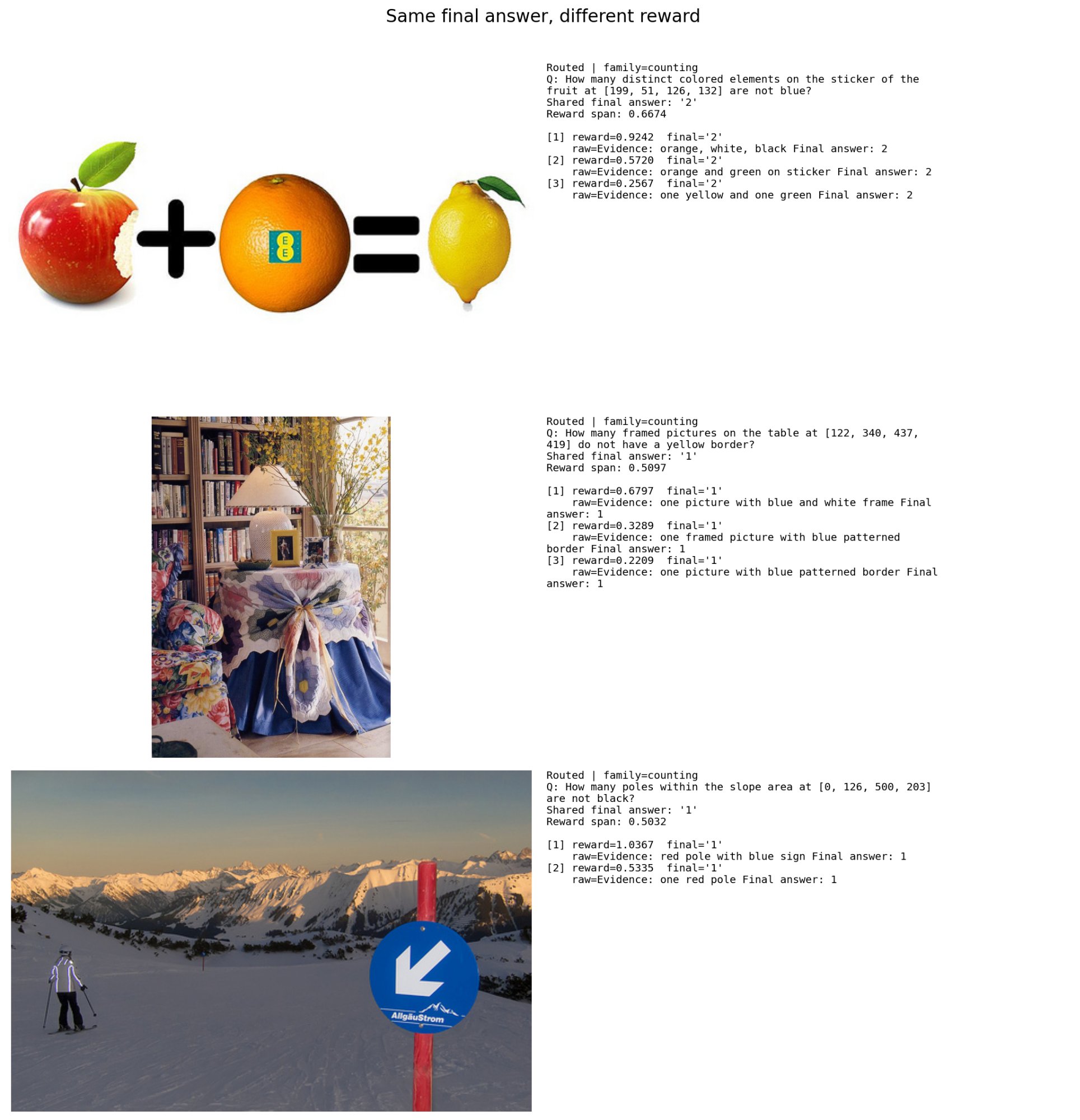}
    \caption{Qualitative examples of same-answer-different-reward under the CED method. All rollouts share the same answer, while \ced{} produces reward spans of 0.50--0.67 according to visual evidence quality.}
    \label{fig:same-answer-diff-reward}
\end{figure}

\begin{figure}[t]
    \centering
    \includegraphics[width=\linewidth]{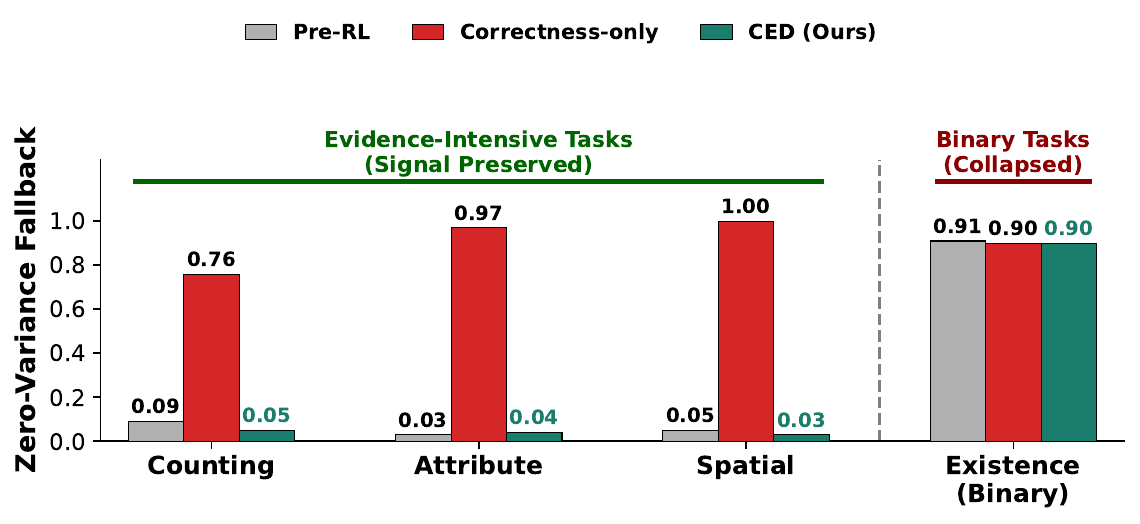}
    \caption{Within-group zero-variance rate across task families. Evidence-intensive tasks show a large reduction from correctness-only reward to our reward; binary-dominated groups remain near high zero-variance because of their two-action structure.}
    \label{fig:zero-variance-appendix}
\end{figure}

\begin{figure}[t]
    \centering
    \includegraphics[width=\linewidth]{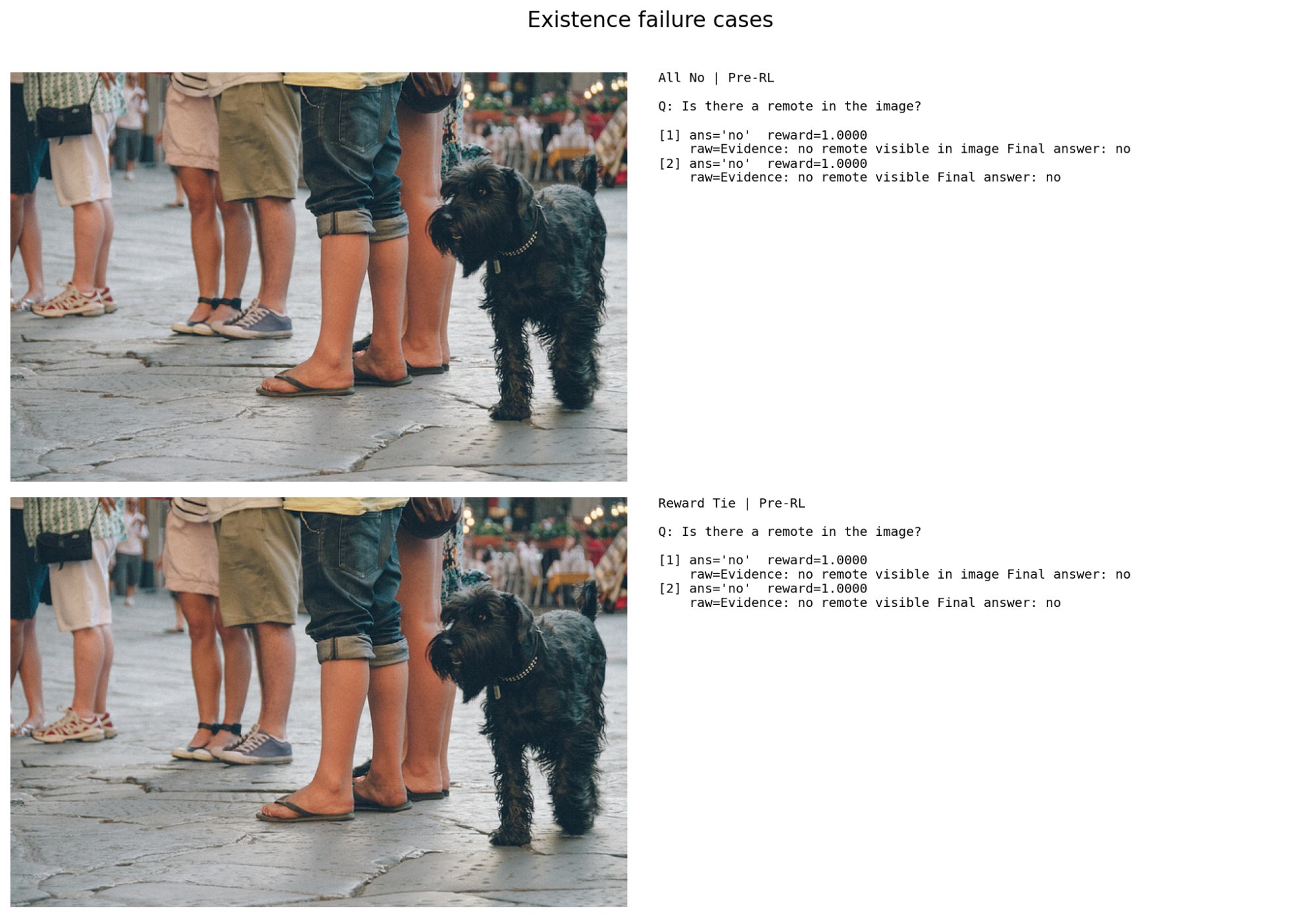}
    \caption{Failure cases on presence tasks: identical negation answers with identical rewards yield zero within-group variance.}
    \label{fig:existence-failure}
\end{figure}

Under Correctness-only, evidence-intensive tasks exhibit low within-group discrimination, with non-constant raw reward rate $r_{\mathrm{nc}}\approx0.05$ and zero-variance rate $\approx0.95$.
Additive and Routed rewards restore discrimination, with $r_{\mathrm{nc}}\approx0.97$ and zero-variance rate $\approx0.04$.
On the presence probe, all formulations remain limited by the binary-task variance bound, supporting the routing of presence tasks to the audit pathway.

\subsection{Scope and Extensions of the Evidence Interface}
\label{app:evidence-interface}

\paragraph{Behavioral target.}
\ced{} is designed to reward a model behavior: a candidate answer should be supported by the visual evidence that justifies it. In the causal view of grounding, this means that the model's support for an answer should decrease more under intervention on the target-evidence path than under matched interventions on non-evidence paths. This requirement is not specific to bounding boxes or object regions.

\paragraph{Region-level instantiation.}
In this paper, we instantiate the audit with object-centric region interventions because regions provide a scalable interface for constructing Evidence Regions and matched non-evidence Regions without question-specific evidence annotations. This choice aligns with the benchmark families studied in our experiments, including counting, spatial reasoning, hallucination detection, and object-centric visual reasoning, where many failures arise when the model answers from language priors, scene context, or salient distractors rather than the relevant local evidence.

\paragraph{Beyond object boxes.} In the causal graph of Figure~1, CED rewards a path-level property: the answer should depend more on the target-evidence path than on shortcut or nuisance paths. Object-region interventions query this property by probing one spatial locus of the evidence path without question-specific evidence annotations, making the audit scalable to open-ended VQA training. The behavioral target itself is defined on the graph structure, not on the intervention format. Richer evidence interfaces such as attribute-preserving interventions, multi-region proposals, or text-span units query the same path at finer granularity or for different evidence types; they refine how the evidence path is probed, not what behavior is rewarded.

\subsection{Answer-CED vs. CoT-CED Details}
\label{app:cot-vs-answer}

The two variants differ in where the evidence sensitivity is computed: CoT-CED applies the counterfactual to the full reasoning trace, while Answer-CED applies it only to the final answer.
Both variants are trained for $2{,}000$ steps on Qwen3.5-9B with LoRA ($r{=}16$, $\alpha{=}32$).
The main paper reports the training trajectories in \Cref{fig:training-dynamics}; here, \Cref{tab:cot-vs-answer} summarizes the final training state and \Cref{tab:cot-behavior,tab:cot-examples} provide the behavior-level diagnostics.

\begin{table}[t]
\centering
\small
\caption{CoT-CED vs. Answer-CED: training-end summary over $2{,}000$ steps.}
\label{tab:cot-vs-answer}
\begin{tabular*}{\linewidth}{@{\extracolsep{\fill}}lcc@{}}
\toprule
\textbf{Metric} & \textbf{CoT-CED} & \textbf{Answer-CED} \\
\midrule
Mean training reward       & $+0.259$ & $-0.009$ \\
Final training correctness & $43.4\%$ & $34.2\%$ \\
\bottomrule
\end{tabular*}
\end{table}

CoT-CED achieves higher training reward (Table~A.12, Figure~\ref{fig:training-dynamics}), yet transfers worse on average (Table~\ref{tab:answer-cot-compare}). This reflects a structural mismatch between dense per-token rewards and GRPO's group-relative optimization. CoT-CED averages evidence rewards over the full reasoning trace, and because trace length is policy-controlled, the model can raise the per-token average by truncating low-reward tokens rather than by improving visual grounding. Tables~A.13--A.14 confirm this collapse: CoT-CED's mean chain shrinks to 3.6 tokens on counting (vs.\ 49.3 for Answer-CED), degenerating to object-cue shorthand. This parallels the process-reward hacking reported by~\citet{guo2025deepseek}, who abandon dense process rewards in favor of outcome-level signals for the same reason. Answer-CED removes this degree of freedom by scoring only the final-answer span, a fixed-length interface whose reward cannot be manipulated through generation length.

\begin{table}[t]
\centering
\small
\caption{End-of-training CoT behavior on held-out prompts. CoT-CED collapses toward an object-cue answer format, while Answer-CED preserves natural multi-sentence reasoning.}
\label{tab:cot-behavior}
\begin{tabular*}{\linewidth}{@{\extracolsep{\fill}}lccc@{}}
\toprule
\textbf{Metric} & \textbf{Benchmark} & \textbf{CoT-CED} & \textbf{Answer-CED} \\
\midrule
Mean CoT length (tokens)        & Counting       & $3.6$  & $49.3$ \\
Median CoT length (tokens)      & Counting       & $3$    & $47$ \\
$p_{90}$ CoT length (tokens)    & Counting       & $6$    & $78$ \\
Mean CoT length (tokens)        & HallusionBench & $3.8$  & $82.1$ \\
Median CoT length (tokens)      & HallusionBench & $3$    & $61$ \\
$p_{90}$ CoT length (tokens)    & HallusionBench & $6$    & $173$ \\
\midrule
Image-referential opener rate   & Counting       & $1.0\%$  & $79.8\%$ \\
Image-referential opener rate   & HallusionBench & $5.2\%$  & $64.2\%$ \\
Unique-token ratio              & Counting       & $99.1\%$ & $75.8\%$ \\
Unique-token ratio              & HallusionBench & $96.8\%$ & $68.1\%$ \\
\bottomrule
\end{tabular*}
\end{table}

\begin{table}[t]
\centering
\footnotesize
\setlength{\tabcolsep}{3pt}
\renewcommand{\arraystretch}{1.08}
\caption{Representative side-by-side generations. Both variants produce the correct final answer, but CoT-CED reduces the chain to an object cue, while Answer-CED keeps an explicit visual check.}
\label{tab:cot-examples}
\begin{tabular*}{\linewidth}{@{\extracolsep{\fill}}>{\raggedright\arraybackslash}p{0.18\linewidth}>{\raggedright\arraybackslash}p{0.24\linewidth}>{\raggedright\arraybackslash}p{0.50\linewidth}@{}}
\toprule
\textbf{Prompt} & \textbf{CoT-CED} & \textbf{Answer-CED} \\
\midrule
How many beds are there in the image? &
\texttt{The beds\,/\,2} &
\emph{Got it, let's look at the image. There's a loft bed with a lower bed. The upper bunk is one, the lower bed is another. So that's two beds.} \\
\addlinespace[1pt]
How many weights are there in the image? &
\texttt{The weights\,/\,Two} &
\emph{Got it, let's look at the image. The man is holding two dumbbells, one in each hand. There's one in the left hand and one in the right hand. So that's two weights.} \\
\bottomrule
\end{tabular*}
\end{table}

\subsection{Formalizing Evidence-Closed Self-Evolution}
\label{app:information-theory}

This appendix provides the complete formalization of the evidence-closed self-evolution regime introduced in \Cref{sec:self-evolve}.

\paragraph{Notation.}
Let $O_t=(I_t,Q_t)$ denote the image-question pair, $\hat Y_t$ the prediction, $V_t=\nu(I_t)$ the task-relevant visual evidence, $G_t\in\{0,1\}$ the grounding indicator, and $S_t$ the endogenous reward.

\begin{definition}[Evidence-closed self-evolution]
\label{def:evidence-closed-app}
A training step is \textbf{evidence-closed} if its signal $T_t^{\mathrm{cl}}$ is measurable with respect to
$\mathcal{B}_t=\sigma(\mathcal{F}_t,I_t,Q_t,\hat Y_t,S_t)$.
\end{definition}

\begin{lemma}[Endogenous reward bottleneck]
\label{lem:bottleneck-app}
If $S_t \perp\!\!\!\perp I_t \mid (\hat Y_t,Q_t,\mathcal{F}_t)$, then
$I(S_t;V_t \mid \hat Y_t,Q_t,\mathcal{F}_t)=0$.
\end{lemma}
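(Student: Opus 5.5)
The plan is to reduce the statement to the data-processing inequality for conditional mutual information, using the fact that $V_t=\nu(I_t)$ is a deterministic function of the image. Write $C_t=(\hat Y_t,Q_t,\mathcal{F}_t)$ for the conditioning tuple. The hypothesis $S_t \perp\!\!\!\perp I_t \mid C_t$ is equivalent to $I(S_t;I_t\mid C_t)=0$, by the standard characterization of conditional independence through vanishing conditional mutual information (the conditional KL divergence between the joint and the product of the conditionals is zero). So it suffices to show $I(S_t;V_t\mid C_t)\le I(S_t;I_t\mid C_t)$, since nonnegativity of mutual information then forces equality with zero.

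For the inequality I will use the chain rule. Expand $I(S_t;I_t,V_t\mid C_t)$ in two ways:
\begin{equation*}
I(S_t;I_t\mid C_t)+I(S_t;V_t\mid I_t,C_t) \;=\; I(S_t;V_t\mid C_t)+I(S_t;I_t\mid V_t,C_t).
\end{equation*}
Because $V_t=\nu(I_t)$ is $\sigma(I_t)$-measurable, it is almost surely constant given $I_t$, so $I(S_t;V_t\mid I_t,C_t)=0$. Dropping the nonnegative term $I(S_t;I_t\mid V_t,C_t)$ then yields $I(S_t;V_t\mid C_t)\le I(S_t;I_t\mid C_t)=0$.

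An alternative route avoids information quantities entirely. Conditional independence of $S_t$ and $I_t$ given $C_t$ means $\sigma(S_t)$ and $\sigma(I_t)$ are conditionally independent given $\sigma(C_t)$. Since $\sigma(V_t)\subseteq\sigma(I_t)$, conditional independence passes to the sub-$\sigma$-algebra, giving $S_t\perp\!\!\!\perp V_t\mid C_t$ and hence zero conditional mutual information. I will present this version as a cross-check, because it is cleaner when the variables are not discrete.

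The main obstacle is measure-theoretic rather than conceptual. $\mathcal{F}_t$ is a filtration/history object and $I_t$ may be continuous, so conditional mutual information has to be defined through regular conditional distributions or as a supremum over finite partitions, and the chain rule must hold in that generality. I plan to state explicitly that all variables live on standard Borel spaces (so regular conditional probabilities exist) and that $\nu$ is measurable. I will then invoke the general chain rule and data-processing inequality in the Dobrushin/Pinsker formulation. If that is too heavy, I will fall back on the $\sigma$-algebra argument, which needs only the existence of conditional expectations.
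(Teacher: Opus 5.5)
Your proof is correct. Your ``alternative route'' is exactly the paper's proof. The paper only observes that $V_t=\nu(I_t)$ is $\sigma(I_t)$-measurable, so the hypothesis $S_t\perp\!\!\!\perp I_t\mid C_t$ (with $C_t=(\hat Y_t,Q_t,\mathcal{F}_t)$) descends to $S_t\perp\!\!\!\perp V_t\mid C_t$, and that is zero conditional mutual information.

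Your primary route is a genuinely different, information-theoretic path: the two-way chain-rule expansion and the data-processing inequality $I(S_t;V_t\mid C_t)\le I(S_t;I_t\mid C_t)$. It buys a quantitative version that the paper's argument does not give. If the reward were only approximately image-blind, $I(S_t;I_t\mid C_t)\le\delta$, the same inequality caps the evidence information at $\delta$, so the lemma would survive small leakage. The cost is needing the chain rule for conditional mutual information in general (possibly continuous, history-conditioned) spaces, which is precisely the measure-theoretic overhead you flag.

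For the exact-zero statement as posed, the $\sigma$-algebra argument is lighter. It needs only that conditional independence restricts to sub-$\sigma$-algebras, and that equal conditional laws give zero conditional divergence. Also, in the chain-rule version you only use the direction ``conditional independence $\Rightarrow$ $I(S_t;I_t\mid C_t)=0$,'' so the full equivalence you state is more than you need.
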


\begin{proof}
Since $V_t=\nu(I_t)$, the conditional independence implies
$S_t \perp\!\!\!\perp V_t \mid (\hat Y_t,Q_t,\mathcal{F}_t)$, which gives the result.
\end{proof}

\begin{proposition}[No incremental grounding information]
\label{prop:no-increment-app}
If $T_t^{\mathrm{cl}}$ is evidence-closed, then
$I(T_t^{\mathrm{cl}};G_t \mid \mathcal{B}_t)=0$.
\end{proposition}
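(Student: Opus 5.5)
The claim is essentially measure-theoretic: a random variable that is measurable with respect to a conditioning $\sigma$-algebra carries no further information once we condition on that $\sigma$-algebra. I would therefore not invoke \Cref{lem:bottleneck-app} for the core step, and argue directly from \Cref{def:evidence-closed-app}.

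\textbf{Step 1: conditional law of $T_t^{\mathrm{cl}}$ is degenerate.} Since $T_t^{\mathrm{cl}}$ is $\mathcal{B}_t$-measurable, for every measurable set $A$ we have $\Pr(T_t^{\mathrm{cl}}\in A\mid \mathcal{B}_t)=\mathbf{1}\{T_t^{\mathrm{cl}}\in A\}$ almost surely. The same identity holds after additionally conditioning on $G_t$:
\[
\Pr(T_t^{\mathrm{cl}}\in A\mid \mathcal{B}_t, G_t)=\mathbf{1}\{T_t^{\mathrm{cl}}\in A\},
\]
because $\sigma(\mathcal{B}_t,G_t)\supseteq\mathcal{B}_t$. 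Hence the regular conditional distribution of $T_t^{\mathrm{cl}}$ given $(\mathcal{B}_t,G_t)$ coincides with its conditional distribution given $\mathcal{B}_t$ alone. That common distribution is a point mass.

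\textbf{Step 2: conditional independence, and hence zero mutual information.} Step 1 gives $T_t^{\mathrm{cl}}\perp\!\!\!\perp G_t\mid\mathcal{B}_t$. I would then write the conditional mutual information as the expected KL divergence
\[
I(T_t^{\mathrm{cl}};G_t\mid\mathcal{B}_t)=\mathbb{E}\big[\mathrm{KL}\big(P_{T\mid \mathcal{B}_t,G_t}\,\|\,P_{T\mid\mathcal{B}_t}\big)\big].
\]
By Step 1 the integrand vanishes almost surely, so the expectation is $0$. When everything is discrete, one can instead note $H(T_t^{\mathrm{cl}}\mid\mathcal{B}_t)=0$. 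Since $0\le I(T;G\mid\mathcal{B})\le H(T\mid\mathcal{B})$, the quantity is pinned to zero.

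\textbf{Main obstacle: well-definedness of conditioning on $\mathcal{B}_t$.} $\mathcal{B}_t$ is a $\sigma$-algebra that includes $I_t$ and the history $\mathcal{F}_t$, which may be continuous. So I would state that all variables live on standard Borel spaces, which guarantees that regular conditional distributions exist and that the KL form of conditional mutual information is valid. This avoids the discrete-entropy shortcut, which fails for continuous $T_t^{\mathrm{cl}}$ because differential entropy can be negative.

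I would then remark on scope. The proposition holds with no assumption on $S_t$. \Cref{lem:bottleneck-app} is what gives it bite: when $S_t$ is a text-only reward, $\mathcal{B}_t$ itself contains no image information about $V_t$ beyond what $(\hat Y_t,Q_t,\mathcal{F}_t)$ already supply. A closed signal therefore cannot add grounding information beyond what the conditioning already contains.
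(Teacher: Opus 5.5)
Your proof is correct and uses the same idea as the paper's one-line argument. Because $T_t^{\mathrm{cl}}$ is $\mathcal{B}_t$-measurable, its conditional law given $\mathcal{B}_t$, and also given $(\mathcal{B}_t,G_t)$, is the point mass at $T_t^{\mathrm{cl}}$, so it carries no residual information about $G_t$; your KL and regular-conditional-distribution formulation makes that rigorous.

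One small slip is in your closing remark. $\mathcal{B}_t$ contains $I_t$, and hence $V_t=\nu(I_t)$, so \Cref{lem:bottleneck-app} says that $S_t$, not $\mathcal{B}_t$, carries no information about $V_t$ beyond $(\hat Y_t,Q_t,\mathcal{F}_t)$.
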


\begin{proof}
$T_t^{\mathrm{cl}}$ is $\mathcal{B}_t$-measurable, so conditioning on $\mathcal{B}_t$ leaves no remaining uncertainty in $T_t^{\mathrm{cl}}$ that can carry additional information about $G_t$.
\end{proof}

\begin{proposition}[Closed-loop reward does not identify grounding]
\label{prop:non-identifiable-app}
If two predictors induce the same distribution over $(I_t,Q_t,\hat Y_t,S_t)$ but differ in grounding, then any objective of the form
$J(h)=\mathbb{E}_h[\phi(I_t,Q_t,\hat Y_t,S_t)]$ cannot distinguish them.
\end{proposition}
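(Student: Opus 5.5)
The argument is a direct pushforward computation: $J(h)$ depends on the predictor $h$ only through the joint law that $h$ induces on the observed tuple $X_t=(I_t,Q_t,\hat Y_t,S_t)$. I would first fix notation. For a predictor $h$, let $P_h$ denote the joint law of all trajectory variables $(I_t,Q_t,\hat Y_t,S_t,G_t,V_t)$ generated by running $h$. Let $P_h^X$ be its marginal (pushforward) on $X_t$. The hypothesis says $P_{h_1}^X=P_{h_2}^X$, while the joint laws of $(X_t,G_t)$ may differ. Concretely, one predictor is grounded and the other follows the shortcut, and they reach the same answers at the same rates, as in \Cref{fig:causal-graph}.

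The key step is the change-of-variables identity. For any measurable $\phi$ for which the expectation exists,
\[
J(h)=\mathbb{E}_{P_h}\bigl[\phi(X_t)\bigr]=\int \phi\, dP_h^X .
\]
Since $P_{h_1}^X=P_{h_2}^X$, this gives $J(h_1)=J(h_2)$. The same holds for every $\phi$ at once, so no choice of judge, annealing schedule, or reweighting of $S_t$ can separate the two predictors. I would note that this covers every evidence-closed signal in the sense of \Cref{def:evidence-closed-app}: such a signal is $\mathcal{B}_t$-measurable, hence a function of $X_t$ (and $\mathcal{F}_t$, which is handled by conditioning on it or including it in $X_t$). This links the statement to \Cref{prop:no-increment-app}. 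I would also remark that when $S_t$ is text-only, the lemma \Cref{lem:bottleneck-app} explains why it is natural for two predictors to share the law of $X_t$ while differing in $G_t$. This remark is motivation only and is not needed for the proof.

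To make ``cannot distinguish'' fully precise, I would extend the claim to optimization. Any procedure that ranks predictors using only values of objectives of this form assigns $h_1$ and $h_2$ equal scores. In particular, if one of them maximizes $J$, so does the other. Gradient-based ascent is the delicate case. GRPO's within-group advantages are also functions of the $S$-values of rollouts, which are themselves measurable with respect to the rollouts' $X$-tuples. Equality of the group-level laws therefore follows if the hypothesis is read as equality of the joint law of the i.i.d.\ group of rollouts. That reading holds automatically, because the rollouts are i.i.d.\ given $O_t$.

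The main obstacle is not the computation, which is one line, but stating the hypothesis carefully. ``Same distribution'' must refer to the full joint law of $(I_t,Q_t,\hat Y_t,S_t)$ (together with $\mathcal{F}_t$, if $\phi$ is allowed to depend on it), and not only to marginals such as answer accuracy. Otherwise the conclusion fails. I would address this by stating the pushforward equality explicitly as the assumption and conditioning on $\mathcal{F}_t$ throughout.
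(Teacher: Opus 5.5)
Your proof is correct and matches the paper's: both rest on the fact that $J(h)=\int\phi\,dP_h^X$ depends on $h$ only through the joint law of $(I_t,Q_t,\hat Y_t,S_t)$, so equal laws force $J(h_1)=J(h_2)$, and your pushforward notation simply makes the paper's one-line argument explicit. The GRPO/optimization extension is not needed for the statement, and be aware that it does not settle the case you call delicate: equal laws of the group advantages give equal values for group-level objectives, but not identical policy-gradient updates, since those also involve $\nabla_\theta\log\pi_\theta$, which depends on the parametrization and not only on the induced law.
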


\begin{proof}
The objective depends only on the joint distribution of $(I_t,Q_t,\hat Y_t,S_t)$.
If two predictors have the same joint distribution over these variables, they have the same value of $J(h)$ regardless of their grounding behavior.
\end{proof}

\begin{assumption}[Audit informativeness]
\label{asmp:audit-info-app}
On evidence-intensive tasks,
$I(A_t;G_t \mid \mathcal{B}_t)>0$,
where $A_t$ denotes the evidence audit signal.
\end{assumption}

\begin{proposition}[Information increment from evidence auditing]
\label{prop:info-increment-app}
Under \Cref{asmp:audit-info-app},
$H(G_t \mid \mathcal{B}_t,A_t) < H(G_t \mid \mathcal{B}_t)$.
\end{proposition}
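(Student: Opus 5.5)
The claim follows from the chain-rule identity linking conditional mutual information to the drop in conditional entropy. We apply the standard decomposition of conditional mutual information with the conditioning $\sigma$-algebra $\mathcal{B}_t$ held fixed:
\[
I(A_t;G_t \mid \mathcal{B}_t) \;=\; H(G_t \mid \mathcal{B}_t) - H(G_t \mid \mathcal{B}_t, A_t).
\]
Since $G_t\in\{0,1\}$ is binary, $H(G_t\mid\mathcal{B}_t)\le \log 2<\infty$. Hence both conditional entropies are finite and the subtraction is well defined; no $\infty-\infty$ issue arises even if $A_t$ is continuous-valued. \Cref{asmp:audit-info-app} states that the left-hand side is strictly positive on evidence-intensive tasks. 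Rearranging gives $H(G_t\mid\mathcal{B}_t,A_t) < H(G_t\mid\mathcal{B}_t)$, which is the claim.

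The steps, in order, are as follows. First, fix the setting: conditional entropies are taken as expectations over the law of $\mathcal{B}_t$ restricted to the evidence-intensive task distribution. Second, invoke the chain rule $H(G_t\mid\mathcal{B}_t)=H(G_t\mid\mathcal{B}_t,A_t)+I(A_t;G_t\mid\mathcal{B}_t)$. This can be justified through the Kullback--Leibler representation $I(A_t;G_t\mid\mathcal{B}_t)=\mathbb{E}\,D_{\mathrm{KL}}\big(P_{G_t\mid\mathcal{B}_t,A_t}\,\|\,P_{G_t\mid\mathcal{B}_t}\big)$, which remains valid for general $A_t$ because $G_t$ is discrete. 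Third, substitute the assumption to obtain the strict inequality.

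It is also worth setting this result against \Cref{prop:no-increment-app}. There, measurability of $T_t^{\mathrm{cl}}$ with respect to $\mathcal{B}_t$ forced the corresponding mutual information to vanish. Here, \Cref{asmp:audit-info-app} implicitly requires that $A_t$ is \emph{not} $\mathcal{B}_t$-measurable, and this is consistent with the paper's setup. The margin $m(I,q,y)$ in \Cref{eq:margin} depends on $\pi_\theta$ evaluated at the intervened inputs $\tilde I_{\setminus\Omega}$, and that quantity is not among the generators of $\mathcal{B}_t$.

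The main obstacle is measure-theoretic. Conditioning on a $\sigma$-algebra that may carry continuous components ($I_t$, $\mathcal{F}_t$) requires regular conditional distributions, and entropy must be defined for the binary $G_t$ under them. I would handle this first by assuming standard Borel spaces so that regular conditional probabilities exist, and then by noting that boundedness of the binary entropy makes every expectation finite.
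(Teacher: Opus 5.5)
Your proof is correct and is the paper's argument: expand $I(A_t;G_t\mid\mathcal{B}_t)=H(G_t\mid\mathcal{B}_t)-H(G_t\mid\mathcal{B}_t,A_t)$ and use the strict positivity granted by the audit-informativeness assumption, and your remark that $H(G_t\mid\mathcal{B}_t)\le\log 2$ rules out an $\infty-\infty$ subtraction is a harmless refinement the paper leaves implicit. Your closing aside is not needed for the proof, and its justification is loose: not being a literal generator of $\mathcal{B}_t$ does not make the margin non-$\mathcal{B}_t$-measurable (it is written as $m(I,q,y)$, a function of generators), so its non-measurability actually rests on $\theta$, the proposal metadata, or the region sampling lying outside the paper's unspecified $\mathcal{F}_t$.
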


\begin{proof}
By the definition of conditional mutual information,
$I(A_t;G_t \mid \mathcal{B}_t)=H(G_t \mid \mathcal{B}_t)-H(G_t \mid \mathcal{B}_t,A_t)$.
The assumption makes the left-hand side strictly positive, giving the stated strict inequality.
\end{proof}

\end{document}